\pdfoutput=1
\documentclass[11pt,a4paper]{article}
\usepackage[utf8]{inputenc}
\def\final{1}
\usepackage{color}
\usepackage{subfigure}
\usepackage{url}
\usepackage{graphicx}
\usepackage{amsmath}
\usepackage{amssymb}
\usepackage{amsthm}
\usepackage{bm}
\usepackage[plainpages=false]{hyperref}

\usepackage{titlesec}
\titleformat{\section}
{\normalfont\Large\bfseries\filcenter}{\thesection.}{1 ex}{}
\titleformat{\subsection}
{\normalfont\normalsize\bfseries}{\thesubsection.}{1 ex}{}
\titleformat{\subsubsection}[runin]
{\normalfont\normalsize\bfseries\filcenter}{\thesubsection.}{1 ex}{}

\usepackage[charter]{mathdesign}
\usepackage[mathcal]{eucal}

\usepackage[margin=1.2 in]{geometry}

\usepackage[square,numbers,sort&compress]{natbib}

\ifnum\final=0
\newcommand{\mynote}[1]{\marginpar{\tiny\sf #1}}
\else
\newcommand{\mynote}[1]{}
\fi

\usepackage{thmtools,thm-restate}
\declaretheoremstyle[qed=$\diamond$,headpunct={ --- },headfont=\normalfont\itshape]{myremark}
\declaretheoremstyle[qed=$\blacksquare$,bodyfont=\normalfont]{mydefinition}
\declaretheorem[name=Theorem]{Thm}
\declaretheorem[within=section,name=Lemma]{Lem}
\declaretheorem[sibling=Lem,name=Proposition]{Prop}

\declaretheorem[style=myremark,sibling=Lem,name=Remark]{Rem}
\declaretheorem[style=mydefinition,sibling=Lem,name=Definition]{Def}

\renewcommand{\eqref}[1]{({\ref{eq:#1}})}

\newcommand{\varX}{\ensuremath{\mathcal{X}}}
\newcommand{\varY}{\ensuremath{Y}}
\newcommand{\varZ}{\ensuremath{Z}}
\newcommand{\CayMen}{\mathcal{C}}

\newcommand{\CayMenSurj}{\varphi}

\newcommand{\DetVSymSurj}{\phi}

\newcommand{\DetV}{\mathcal{M}}
\newcommand{\DetVSym}{\mathcal{M}_{sym}}
\newcommand{\DetVSymN}{\mathcal{M}_{sym}^\diamond}
\newcommand{\SphPr}{\nu_h}
\newcommand{\smp}{\ensuremath{\mathcal{S}}}
\newcommand{\srate}{\ensuremath{\rho}}

\newcommand{\Ker}{\operatorname{Ker}}

\newcommand{\codim}{\operatorname{codim}}
\newcommand{\rk}{\operatorname{rk}}

\newcommand{\id}{\operatorname{id}}

\newcommand{\spn}{\operatorname{span}}

\newcommand{\coh}{\operatorname{coh}}

\newcommand{\Sm}{\operatorname{Sm}}

\newcommand{\calA}{\mathcal{A}}

\newcommand{\calP}{\mathcal{P}}

\newcommand{\calX}{\mathcal{X}}

\newcommand{\RR}{\ensuremath{\mathbb{R}}}

\newcommand{\CC}{\ensuremath{\mathbb{C}}}

\newcommand{\KK}{\ensuremath{\mathbb{K}}}

\makeatletter
\providecommand*{\diff}%
{\@ifnextchar^{\DIfF}{\DIfF^{}}}
\def\DIfF^#1{%
\mathop{\mathrm{\mathstrut d}}%
\nolimits^{#1}\gobblespace
}
\def\gobblespace{%
\futurelet\diffarg\opspace}
\def\opspace{%
\let\DiffSpace\!%
\ifx\diffarg(%
\let\DiffSpace\relax
\else
\ifx\diffarg\[%
\let\DiffSpace\relax
\else
\ifx\diffarg\{%
\let\DiffSpace\relax
\fi\fi\fi\DiffSpace}
\makeatother

\begin{document}
\title{Coherence and Sufficient Sampling Densities for Reconstruction in Compressed Sensing}
\author{Franz J. Király\thanks{Department of Statistical Science, University College London. \url{f.kiraly@ucl.ac.uk}}
\and
Louis Theran\thanks{Inst. Math., FU-Berlin. \url{theran@math.fu-berlin.de}}}
\date{}

\maketitle

\begin{abstract}
We give a new, very general, formulation of the compressed sensing problem in terms of coordinate projections
of an analytic variety, and derive sufficient sampling rates for signal reconstruction.  Our bounds
are linear in the \emph{coherence} of the signal space, a geometric parameter independent of the
specific signal and measurement, and logarithmic in the ambient dimension where the signal is presented.
We exemplify our approach by deriving sufficient sampling densities for low-rank matrix completion and distance
matrix completion which are independent of the true matrix.
\end{abstract}

\section{Introduction}\label{Sec:intro}
\subsection{Compressed Sensing, Randomness, and $n\log n$}\label{sec:intro.Nyquist}
Compressed sensing is the task of recovering a \emph{signal} $x$ from some low-complexity measurement(s) $\smp (x)$, the \emph{samples} of $x$. The sampling process, that is, the acquisition process of the sample $\smp (x)$, is usually random and undirected, and it comes with a so-called \emph{sampling rate}. Increasing the sampling rate usually improves quality of the reconstruction but comes at a cost, whereas decreasing it makes the acquisition easier but hinders reconstruction. Therefore, a central question of compressed sensing is what the minimal sampling rate has to be, in order to allow reconstruction of the signal from the sample.

The oldest and best-known example of this is the Nyquist sampling theorem~\cite{Nyquist}, which roughly states that a signal
bandlimited to frequency $f$ has to be sampled with a frequency/density at least $2f$, in order to allow reconstruction.
Landau's \cite[Theorem 1]{Landau} and a simple Poisson approximation (or coupon collector) argument imply that for
uniform sampling, to ascertain a \emph{density} of $n$, a rate $\rho = \Omega\left(\frac{f}{n}\log n\right)$, or
a number of $\Omega(f\log n)$ total samples, is necessary and sufficient. The ratio $f/n$ can be interpreted as the average
informativity of one equidistant (non-random) measurement, in the sense of how much it contributes to reconstruction.

Sufficient sampling rates of the form ``$\Omega(\frac{k}{n}\log n)$'', with $k$ some problem-specific constant and $n$
a natural measure of the problem size,
appear through the modern compressed sensing literature.  A
few examples are:
image reconstruction \cite[Theorem~1]{Donoho},
matrix completion \cite[Theorems~1.1, 1.2]{CR09},
dictionary learning \cite[Theorems~7,8]{Spielman},
and phase retrieval, \cite[Theorem~1.1]{Candes:uq}.
Usually, these bounds are derived by analyzing some optimization problem, or information
theoretic thresholds, under assumptions, which, while not overly restrictive, are
very specific to one problem.

We argue that those bounds on the sampling rates are epiphenomena of guiding principles in compressed sensing,
similar to the Nyquist sampling bound.  To do this, we give a general formulation of the problem in
which, associated to the sampling process $\smp(x)$ there are two numerical invariants: the \emph{coherence}
$\coh(\smp)$ and the \emph{ambient dimension} $n$.  The ``dictionary'' between the classical setting and
our novel framework for compressed sensing is, intuitively:\\
\begin{center}
\begin{tabular}{|l|l|}\hline
\textbf{Classical}  & \textbf{Compressed Sensing} \\ \hline
Signal Space & Signal Manifold $\varX$ \\ \hline
Sampling (random) & Random Projection $\smp$ \\ \hline
Bandlimit $f$ & Manifold Dimension $\dim (\varX)$ \\ \hline
Sampling Density $n$ & Ambient Dimension $n$ \\ \hline
Informativity $f/n$ & Coherence $\coh (\varX)$ \\
& (in general $\neq \dim (\varX) / n$ )\\ \hline
Sampling Rate $\rho$ & Sampling Probability $\srate$\\ \hline
\end{tabular}
\end{center}

Our main Theorem \ref{Thm:cohp} says that a sampling
rate of $\Omega(\coh(\smp)\log n)$ is sufficient for signal reconstruction, w.h.p.
Further, we will see that $\dim(\smp)/n\le \coh(\smp)\le 1$, where $\dim(\smp)$ is the number of degrees of
freedom in choosing the signal $x$.  This relation shows that, when the coherence is near the lower bound,
$\dim(\smp)$ is in complete analogy with the bandlimit $f$ from the classical setting and that
$O(\dim(\smp)\log n)$ independently chosen measurements are sufficient for signal reconstruciton.
Coherence captures the \emph{structural} constraint on the sufficient sampling
rate, and the $\log n$ term appears because measurements are chosen
independently, with the same probability.  This result is existentially optimal, since the $\log n$
terms cannot be removed in some examples.

\subsection{The Mathematical Sampling Model}\label{Sec:intro.sampling}
We will consider the following sampling model for compressed sensing:
the signals $x$ will be considered as being contained in $x\in\KK^n$, with
the standard basis of elementary vectors.  The field $\KK$ is always $\RR$ or $\CC$
in this paper.

This setup imposes no restriction on the signal $x$, since we are only fixing a
finite/discrete representation by $n$ numbers, and the continuous case is recovered
by taking the limit in $n$.  Examples include representing $x$ as a bandlimited
DFT or as a finite matrix instead of a kernel function and graphon.

We will model the sampling process by a map $\smp: \KK^n \rightarrow \KK^m$, which chosen uniformly from
a restricted family. For example, in the case of the bandlimited signal, the mapping $\smp$ would be initially
linear, of the form $x(t_j)=\sum_i a_i \varphi(t_j)$ with $t_j$ being the chosen sampling points, $\varphi$ the
Fourier basis, and $a_i$ the Fourier coefficients of $x$. The map would send the $a_i$ to the $x(t_j)$. To obtain
a universal formulation, we now perform a change of parameterization on the left side, by changing it to contain
\emph{all possible measurements}, instead of the signal. In the example, the signal $x(t)$ would be parameterized
not by the Fourier coefficients $a_i$, but instead by \emph{all possible} $x(t_j)$, being a much larger set than
the actual measurements $x(t_j)$ contained in $\smp(x)$ when sampling once.

This re-parameterization makes $n$ large, but it makes the single coordinates dependent as well.
(In the example, the dependencies are linear.)  In other words, under the re-parameterization
\emph{all possible signals} lie in a \emph{low-dimensional submanifold} $\varX$ of $\KK^n$.
The sampling process $\smp$ then becomes a \emph{coordinate projection map}
$\smp : \varX \to\KK^m$ onto $m$ entries of the true signal $x\in \calX$ chosen uniformly
and independently.

Under the re-parameterization, $\smp$ is chosen independently of the problem.  All the structural
information about the signal $x$ is moved to the manifold $\calX$, which determines dependencies between
the coordinates.  The key concept of \emph{coherence} will then be a property of $\calX$, as opposed
to the usual view where compression and sampling constraints are assumed the particular signal $x$ or
enforced by special assumption on the sampling operator $\smp$.

\subsection{Contributions}\label{Sec:intro.contribs}
Our main contributions, discussed in more detail below, are:
\begin{itemize}
\item A problem-independent formulation of compressed sensing.
\item A problem-independent generalization of the sampling density, given by the coherence $\coh(\varX)$ of the signal class $\varX$. Determination of coherence for linear sampling, matrix completion, combinatorial rigidity and kernel matrices.
\item Derivation of problem-independent bounds for the sampling rate, taking the form $\Omega \left(\coh(\varX)\cdot n\log n\right)$. We recover bounds known in compressed sensing literature, and derive novel bounds for combinatorial rigidity and kernel matrices.
\item Explanation of the $\log n$ term as an epiphenomenon of sampling randomness.
\end{itemize}
\subsection{Main theorem: coherence and reconstruction}
Our main result elates the coherence of the signal space $\varX$ to the sampling rate $\srate$ of a typical
signal $x\in\varX,$ which suffices to achieve reconstruction of $x$. We show:
\begin{restatable}{Thm}{cohpthm}\label{Thm:cohp}
Let $\varX\subseteq\mathbb{K}^n$ be an irreducible algebraic variety, let $\Omega$ be the
projection onto a set of coordinates, chosen independently with probability $\srate$, let $x\in \varX$ be
generic. There is an absolute constant $C$ such that if
$$\srate \ge C\cdot \lambda\cdot \coh (\varX)\cdot  \log n,\quad\mbox{with}\;\lambda\ge 1,$$
then $x$ is reconstructible from $\Omega(x)$ - i.e., $\Omega^{-1}(\Omega (x))$ is finite -
with probability at least
$1-3n^{-\lambda}$.
\end{restatable}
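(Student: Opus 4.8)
The plan is to reduce the statement to a linear-algebra question about the fixed tangent space $V:=T_x\varX$, and then apply a matrix Chernoff bound to the random coordinate projection of $V$.

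The first step is an algebraic-geometric reduction: for \emph{generic} $x$, the point $x$ is reconstructible from $\Omega(x)$ (i.e.\ $\Omega^{-1}(\Omega(x))$ is finite) if and only if the restriction of $\Omega$ to $V$ is injective. Indeed, since $\varX$ is irreducible and $x$ generic, $x$ is a smooth point and $V$ is a linear subspace of $\KK^n$ of dimension $d:=\dim\varX$. Writing $\pi_S\colon\KK^n\to\KK^{S}$ for the coordinate projection onto a subset $S\subseteq\{1,\dots,n\}$ (so $\Omega=\pi_S$ for $S$ the random set of retained coordinates), generic smoothness gives $\dim\overline{\pi_S(\varX)}=\rk\big(\pi_S|_{T_y\varX}\big)$ for $y$ in a dense open subset of $\varX$, and this rank is maximal on a dense open set, so for the generic $x$ it equals $\dim\overline{\pi_S(\varX)}$. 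Hence $\pi_S|_V$ is injective iff $\dim\overline{\pi_S(\varX)}=d$, i.e.\ iff $\pi_S$ is generically finite on $\varX$; and when $\pi_S$ is generically finite, the locus in $\varX$ where its fibre is positive-dimensional is a proper closed subset, which the generic $x$ avoids. As there are only finitely many subsets $S$, one generic $x$ works for all of them at once, establishing the equivalence. I expect this to be the step demanding the most care, since genericity of $x$ must be used uniformly over all $2^n$ coordinate projections and one needs semicontinuity of fibre dimension to pass from ``$\pi_S$ generically finite'' to ``$\pi_S$ has finite fibre over our particular point''.

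For the second step I would fix a $d\times n$ matrix $M$ whose rows are an orthonormal basis of $V$ (for the standard Hermitian inner product), write $c_i:=Me_i\in\KK^{d}$ for its columns, and recall that the coherence is $\coh(\varX)=\max_i\|c_i\|^{2}$ (equivalently $\max_i(P_V)_{ii}$, a quantity in $[d/n,1]$ as noted in the introduction). Then $\Omega|_V=\pi_S|_V$ is injective exactly when the column submatrix $M_S$ has full row rank $d$, equivalently when the Gram matrix $M_SM_S^{*}=\sum_{i\in S}c_ic_i^{*}$ is positive definite. Letting $\delta_1,\dots,\delta_n$ be i.i.d.\ Bernoulli$(\srate)$ indicators of $\{i\in S\}$, we have $M_SM_S^{*}=\sum_{i=1}^{n}\delta_i\,c_ic_i^{*}$, a sum of independent positive-semidefinite matrices with $\sum_i c_ic_i^{*}=MM^{*}=I_d$, hence with mean $\srate\,I_d$, and with $\lambda_{\max}(\delta_i c_ic_i^{*})\le\|c_i\|^{2}\le\coh(\varX)$.

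The third step is to apply the matrix Chernoff bound for the least eigenvalue of a sum of independent bounded positive-semidefinite matrices, which gives $\mathbb{P}\big[\lambda_{\min}(M_SM_S^{*})<\tfrac12\srate\big]\le d\,(2/e)^{\srate/(2\coh(\varX))}$. Substituting $\srate\ge C\lambda\,\coh(\varX)\log n$ and using $d\le n$, the right-hand side is at most $n^{\,1-cC\lambda}$ for an absolute constant $c>0$, so choosing the absolute constant $C$ large enough makes it at most $n^{-\lambda}$ for every $\lambda\ge 1$ --- comfortably within the claimed bound $3n^{-\lambda}$ (the remaining slack absorbs, if desired, an extra union bound over the realized sample size $|S|$). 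By the equivalence from the first step, on the complementary event $\lambda_{\min}(M_SM_S^{*})>0$ the fibre $\Omega^{-1}(\Omega(x))$ is finite, which completes the argument.
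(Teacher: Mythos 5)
Your approach is genuinely different from the paper's and, once one local issue is repaired, gives a complete and arguably more transparent proof. The paper appeals to Lemma~\ref{Lem:injty} to re-express generic finiteness of $\Omega$ as contractivity of $\srate^{-1}\theta\circ\diff\Omega - \id$, where $\theta$ is a projection between tangent spaces, and then runs a two-step concentration argument: Rudelson's Lemma~\ref{Lem:Rudelson} to bound the expectation of the operator norm, followed by Talagrand's inequality (as in Cand\`es--Recht) for concentration around the mean. You instead reduce directly to positive definiteness of the Gram matrix $\sum_{i\in S}c_ic_i^{*}$ and invoke a one-sided matrix Chernoff bound on $\lambda_{\min}$, which is a single inequality that avoids both the auxiliary operator $\theta$ and the Rudelson--Talagrand machinery. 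Since generic finiteness only requires $\lambda_{\min}>0$, a one-sided bound is all that is needed; your route is shorter and uses standard off-the-shelf matrix concentration, at the cost of not controlling the upper spectral edge (which the paper's $\|P\|<1$ does, though that extra control is unused).

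There is, however, a genuine gap in your second step, at ``recall that the coherence is $\coh(\varX)=\max_i\|c_i\|^{2}$''. With $V=T_x\varX$ for the \emph{generic} $x$ fixed in step one, this identifies $\coh(\varX)$ with $\coh(x)$; but by Definition~\ref{Def:cohX}, $\coh(\varX)=\inf_{x\in\Sm(\varX)}\coh(x)$ is an \emph{infimum}, and the infimum need not be approached on a dense set. For instance, on the plane curve $\{(t,t^2)\}$ one has $\coh(x)=\max\!\left(\tfrac{1}{1+4t^2},\tfrac{4t^2}{1+4t^2}\right)$, which equals $\coh(\varX)=1/2$ only at $t=\pm\tfrac12$ and tends to $1$ as $t\to\infty$; a generic $t$ has $\coh(x)$ strictly larger, and the gap can be made close to $1$. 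Running Chernoff at a generic $x$ therefore gives a tail bound with $\coh(x)$, not $\coh(\varX)$, in the exponent, and the hypothesis $\srate\ge C\lambda\,\coh(\varX)\log n$ is not strong enough to close the argument. The fix is already implicit in your first step: you showed that reconstructibility of the generic $x$ is equivalent to $\pi_S$ being \emph{generically finite on $\varX$}, and generic finiteness holds as soon as $\rk\,\diff\pi_S=d$ at \emph{some} smooth point $x_0$, by lower semicontinuity of the rank of the differential. So you should run the Chernoff step at a smooth $x_0$ chosen with $\coh(x_0)\le 2\coh(\varX)$ (which exists by definition of the infimum, with the factor $2$ absorbed into $C$), then pass to the generic $x$ via your step-one equivalence. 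This is precisely the role played by the ``$\delta$'' argument at the start of the paper's own proof, and by the equivalence $(iv)\Leftrightarrow(v)$ in Lemma~\ref{Lem:injty}, which makes the conclusion a property of $\Omega$ and $\varX$ rather than of the particular $x$ at which one computes.
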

Here \emph{generic} can be taken to mean that if $x$ is sampled from a (Hausdorff-)continuous probability
density on $\varX$, then the statement holds with probability one.

\subsection{Applications}
We illustrate Theorem \ref{Thm:cohp} by a number of examples, which will also show that the
bounds on the sampling rate there cannot be lowered much.
\paragraph{Linear Sampling and the Nyquist bound}
When the sampling manifold $\varX$ is a $k$-dimensional linear subspace of $\KK^n$,
as in the case of the bandlimited signal,
Proposition \ref{Prop:cohbnds}, below, implies that $\frac{k}{n}\le \coh (\varX) \le 1$.
We then recover a statement which is qualitatively similar to the random version of the Nyquist bound:
\begin{Thm}\label{Thm:linear}
Let $\varX\subseteq \KK^n$ be a linear space. Let $x\in\varX$ be generic.
There is an absolute constant $C$, such that if each coordinate of $x$ is observed independently with probability
$$\srate \ge C\cdot \lambda \cdot \coh(\varX)\cdot \log n,\quad\mbox{with}\;\lambda\ge 1, $$
then, $x$ can be reconstructed from the observations with probability at least $1-3n^{-\lambda}$.
\end{Thm}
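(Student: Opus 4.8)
The plan is to obtain this statement as a direct corollary of the general Theorem~\ref{Thm:cohp}, the only extra ingredient being an elementary linear-algebra observation that promotes the conclusion ``the fiber is finite'' to ``the fiber is a single point''. First I would check that a linear subspace $\varX\subseteq\KK^n$ is a legitimate instance of the setup of Theorem~\ref{Thm:cohp}: it is the common zero set of finitely many linear forms, hence an algebraic variety, and it is irreducible because it is isomorphic as a variety to an affine space $\KK^{k}$, $k=\dim\varX$. In particular $\coh(\varX)$ is defined, and by Proposition~\ref{Prop:cohbnds} we have $k/n\le\coh(\varX)\le 1$, so the assumption $\srate\ge C\lambda\,\coh(\varX)\log n$ is verbatim the hypothesis of Theorem~\ref{Thm:cohp} for this $\varX$. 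Applying the latter: for the given generic $x$, with probability at least $1-3n^{-\lambda}$ over the random coordinate projection $\Omega$, the set $\Omega^{-1}(\Omega(x))\cap\varX$ is finite.

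It remains to see that, in the linear setting, a finite fiber is necessarily a singleton, so that ``reconstructible'' here really means ``uniquely reconstructed''. Since $\varX$ is linear and $\Omega$ is a coordinate projection, the restriction $\Omega|_{\varX}\colon\varX\to\KK^m$ is a $\KK$-linear map, and the fiber through $x$ is the affine coset $x+\bigl(\Ker\Omega\cap\varX\bigr)$. Over an infinite field — here $\KK$ is $\RR$ or $\CC$ — a coset of a linear subspace is finite only if that subspace is $\{0\}$; hence on the good event $\Omega|_{\varX}$ is injective and $\Omega^{-1}(\Omega(x))\cap\varX=\{x\}$, i.e.\ $x$ is recovered exactly from $\Omega(x)$. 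This completes the deduction. I would also note as a remark that injectivity of $\Omega|_{\varX}$ depends only on $\Omega$, so whenever the good event occurs reconstruction in fact succeeds for \emph{every} $x\in\varX$; the genericity in the statement is inherited purely from Theorem~\ref{Thm:cohp} and is not actually needed in the linear case.

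I do not expect any real obstacle: the content is entirely carried by Theorem~\ref{Thm:cohp}, and the work left over is the bookkeeping that a linear space sits inside the general framework (including that its coherence is well defined, via Proposition~\ref{Prop:cohbnds}) together with the trivial fact that a finite affine coset over $\RR$ or $\CC$ has exactly one element. Finally, combining the conclusion with the lower bound $\coh(\varX)\ge k/n$ from Proposition~\ref{Prop:cohbnds} shows that $O(k\log n)$ independently sampled coordinates already suffice for reconstruction, which is the random analogue of the Nyquist bound discussed in Section~\ref{sec:intro.Nyquist}.
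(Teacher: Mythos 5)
Your proposal is correct and matches the paper's (implicit) approach: the paper gives no separate proof of this theorem and clearly intends it as a direct specialization of Theorem~\ref{Thm:cohp} to the case where $\varX$ is a linear subspace, exactly as you do. Your extra observation — that for linear $\varX$ the fiber $x+(\Ker\Omega\cap\varX)$ is finite over $\RR$ or $\CC$ only if it is $\{x\}$, so ``finite fiber'' upgrades to unique reconstruction, uniformly in $x$ — is a small but genuine clarification that the paper leaves unstated.
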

If $\calX$ is random in the sense of Definition~\ref{Def:maxinc}, then $\coh(\calX) = \frac{k}{n}$,
and the required number of samples is $O(k\log n)$, which is in line with our discussion in
section~\ref{sec:intro.Nyquist} regarding the Nyquist criterion.  In section \ref{sec:intro.fixed},
we will give a simple example showing that this cannot be improved.

\paragraph{Low-Rank Matrices}
Another important application is low-rank matrix completion. Here, $\varX$ is the set of low-rank $m\times n$ matrices of rank
$r$, which we show has  $\coh (\varX)= (mn)^{-1}\cdot r(m+n-r)$. We therefore obtain:

\begin{restatable}{Thm}{lowrankthm}\label{Thm:generic-low-rank}
Let $r\in \mathbb{N}$ be fixed, let $A$ be a generic $(m\times n)$ matrix of rank at most $r$.
There is an absolute constant $C$, such that if each entry of $A$ is observed independently with probability
$$\srate \ge C\cdot \lambda \cdot (mn)^{-1}\cdot r(m+n-r)\cdot \log mn, \quad\mbox{with}\;\lambda\ge 1,$$
then $A$ can be reconstructed from the observations with probability at least $1-3(mn)^{-\lambda}$.
\end{restatable}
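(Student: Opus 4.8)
The plan is to obtain Theorem~\ref{Thm:generic-low-rank} as a direct specialization of Theorem~\ref{Thm:cohp}, the only real work being the computation of the coherence of the relevant variety. Identify the space of $(m\times n)$ matrices over $\KK$ with $\KK^{mn}$ via $A\mapsto(A_{ij})$, and set
$$\varX \;=\; \{\,A\in\KK^{mn}\;:\;\rk A\le r\,\}.$$
First I would record the standard facts about this determinantal variety: it is cut out by the $(r+1)\times(r+1)$ minors, it is irreducible (being the image of the irreducible $\KK^{m\times r}\times\KK^{n\times r}$ under $(U,V)\mapsto UV^\top$), and $\dim\varX=r(m+n-r)$, with generic points exactly the matrices of rank \emph{equal} to $r$ (which are also the smooth points). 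Thus the hypotheses of Theorem~\ref{Thm:cohp} hold, with the ambient dimension being $mn$ rather than $n$. Observing each entry of $A$ independently with probability $\srate$ is precisely the coordinate projection $\Omega$ of Theorem~\ref{Thm:cohp}, and ``reconstructible'' is, as there, the statement that $\Omega^{-1}(\Omega(A))\cap\varX$ is finite. Hence, once we know $\coh(\varX)$, the bound on $\srate$, the $\log mn$ factor, and the failure probability $3(mn)^{-\lambda}$ all follow by substitution.

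The substantive step is therefore to show
$$\coh(\varX)\;=\;\frac{r(m+n-r)}{mn}.$$
Here I would exploit the large symmetry of $\varX$: it is invariant under $\mathrm{GL}_m\times\mathrm{GL}_n$, in particular under the group $S_m\times S_n$ permuting rows and columns. Since coherence is the maximum over coordinates of a per-coordinate leverage-type statistic of the tangent space $T_A\varX$ at a generic point --- a statistic whose $mn$ values sum to $\dim\varX$ by the definition of coherence (cf.\ Proposition~\ref{Prop:cohbnds}) --- this symmetry forces all $mn$ of these values to be equal, hence each equal to the average $\dim(\varX)/(mn)=r(m+n-r)/(mn)$; taking the maximum gives the claim. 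In particular $\varX$ meets the lower bound $\dim(\varX)/(mn)$ of Proposition~\ref{Prop:cohbnds}, i.e.\ it is maximally incoherent in the sense of Definition~\ref{Def:maxinc}. As a sanity check one can compute the statistic explicitly: at a generic $A=UV^\top$ of rank $r$ the tangent space is $\{UX^\top+YV^\top\}$, and the leverage of the $(i,j)$ entry is $\|P_Ue_i\|^2+\|P_Ve_j\|^2-\|P_Ue_i\|^2\|P_Ve_j\|^2$, whose sum over all $i,j$ is indeed $r(m+n-r)$.

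I expect the main obstacle to be exactly this coherence computation --- not the symmetry argument itself, which is short, but putting it on the firm footing of the general coherence machinery (checking that the per-coordinate statistic is the right invariant and that it sums to $\dim\varX$), and handling the fine print of ``generic'' so that $A$ drawn from a continuous density on $\varX$ almost surely lands on a smooth point of rank exactly $r$ to which both the coherence computation and Theorem~\ref{Thm:cohp} apply. It is worth contrasting this with the combinatorial-rigidity and kernel-matrix applications promised in the introduction, where $\varX$ has no such symmetry and the analogous coherence estimate is the genuinely hard part; for the low-rank case the symmetry collapses everything, and Theorem~\ref{Thm:generic-low-rank} falls out of Theorem~\ref{Thm:cohp} with only the bookkeeping above.
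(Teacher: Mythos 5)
Your overall scheme --- reduce to Theorem~\ref{Thm:cohp} plus a computation of $\coh(\DetV(m\times n,r))$, with $\dim\varX=r(m+n-r)$ and smooth points being the rank-exactly-$r$ matrices --- is exactly the paper's approach (Theorem~\ref{Thm:generic-low-rank} is proved by combining Theorem~\ref{Thm:cohp} with Proposition~\ref{Prop:cohdet}). However, the central step --- the coherence computation --- contains a genuine gap.

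The symmetry argument does not go through. You argue that because $\varX$ is invariant under $S_m\times S_n$, the per-coordinate leverages $\|\calP_{T_A\varX}(e_{ij})\|^2$ at a generic smooth point $A$ must all be equal to the average $\dim(\varX)/(mn)$. This confuses symmetry of the \emph{variety} with symmetry of the \emph{tangent space at a fixed point}. The group action only gives the equivariance $\|\calP_{T_{\sigma A}\varX}(e_{\sigma(i,j)})\|^2=\|\calP_{T_A\varX}(e_{ij})\|^2$, i.e.\ $\coh(\sigma A)=\coh(A)$, not uniformity of the leverages at a single $A$. In fact your own sanity-check formula disproves the claim: at $A=UV^\top$ the leverage of entry $(i,j)$ is $\|P_Ue_i\|^2+\|P_Ve_j\|^2-\|P_Ue_i\|^2\|P_Ve_j\|^2$, and for a \emph{generic} $A$ these are not constant in $(i,j)$; hence $\coh(A)>\frac{r(m+n-r)}{mn}$ for generic $A$. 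The quantity $\coh(\varX)$ is an \emph{infimum} over smooth points (Definition~\ref{Def:cohX}), and the infimum is attained not generically but at special matrices whose row and column spans are themselves maximally incoherent $r$-flats. That such flats exist --- so that $\|P_Ue_i\|^2\equiv r/m$ and $\|P_Ve_j\|^2\equiv r/n$ --- is not automatic; it is exactly the existence of finite tight frames, which the paper invokes for the lower bound in Proposition~\ref{Prop:cohbnds} and then lifts to matrices via Proposition~\ref{Prop:rkmatinc}. Without that ingredient you have only the lower bound $\coh(\varX)\ge r(m+n-r)/(mn)$ from Proposition~\ref{Prop:cohbndsvar}, not the equality. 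To close the gap: replace the symmetry argument with (a) the leverage formula you already wrote down (this is Proposition~\ref{Prop:rkmatinc}), (b) the existence of $r$-flats $H_m\subseteq\KK^m$, $H_n\subseteq\KK^n$ with $\coh(H_m)=r/m$ and $\coh(H_n)=r/n$ via tight frames, and (c) a matrix $A$ with row span $H_m$ and column span $H_n$; then $\coh(A)=\frac{r}{m}+\frac{r}{n}-\frac{r^2}{mn}=\frac{r(m+n-r)}{mn}$ witnesses the infimum.
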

Bounds of this type have been observed in \cite{CT10}, \cite{KMO10} and \cite{KTTU12}, while all of these results are
stated in the context of some reconstruction method, and therefore make sampling assumptions on the matrix $A$.
Thus, the novelty of Theorem \ref{Thm:generic-low-rank} is that it applies to a full-measure subset of low-rank
matrices.  Also, it has been noted already in \cite{CT10}
that the order of the bound in $m,n$ cannot be improved.

The analogue to Theorem \ref{Thm:generic-low-rank} holds with $m=n$
if $A$ is symmetric. We will also show that similar types of bounds hold for kernel matrices.

\paragraph{Distance Matrices}
A further related topic is the complexity of distance matrices, in which either the signal, or the sampling, exhibits the dependencies of a distance matrix (also sometimes called similarity matrix). The best-known case is that of an Euclidean distance matrix is an $(n\times n)$ matrix $A$ such that $A_{ij} = \|p_i - p_j\|^2$  if for some set of points $p_1,p_2,\ldots, p_n\in \mathbb{R}^d.$ The sampling rate in distance matrix completion describes (a) the density of random measurements needed to reconstruct and incomplete distance matrix, and, simultaneously, (b) describes the sampling threshold at which the points $p_i$ can be triangulated. On a theoretical side, the asymptotics of this phase transition has attracted a lot of attention in the context of combinatorial rigidity theory~\citep{KMT11}, where the exact bound for this phase transition has not been known except in the cases $d=1$ and $d=2$, i.e., points on the line and on the plane. By bounding the coherence of the set of distance matrices as $\coh(\varX) \le C \frac{d}{n}$ for some global constant $C$, we determine this sampling rate for all dimensions $r$:

\begin{restatable}{Thm}{rigiditythm}\label{Thm:rigidity}
Let $r\in \mathbb{N}$ be fixed, let $D$ be a generic distance matrix of $n$ points in $d$-space.
There is a global constant $C$, such that if each entry of $D$ is observed independently with probability
$$\srate \ge C\cdot \lambda \cdot d/n \cdot \log n, \quad\mbox{with}\;\lambda\ge 1,$$
then $D$ can be reconstructed from the observations with probability at least $1-3 n^{-\lambda}$.
\end{restatable}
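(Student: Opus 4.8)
The plan is to realise the set of distance matrices as an irreducible variety, bound its coherence by $C\,d/n$, and apply Theorem~\ref{Thm:cohp}. Writing $N=\binom n2$ and identifying a symmetric hollow matrix with its tuple of off-diagonal entries in $\KK^{N}$, the set $\varX$ in question is the Zariski closure of the image of the polynomial squared-distance map $\mu\colon(\RR^{d})^{n}\To\KK^{N}$, $(p_{1},\dots,p_{n})\mapsto(\|p_{i}-p_{j}\|^{2})_{i<j}$; as the image of an irreducible source under a polynomial map it is irreducible, and since $\mu$ is dominant onto $\varX$ a generic configuration maps to a generic $D\in\varX$. So Theorem~\ref{Thm:cohp} applies with ambient dimension $N=\Theta(n^{2})$, and because $\log N=\Theta(\log n)$ and $N^{-\lambda}\le n^{-\lambda}$ for $n\ge 3$, the stated sampling rate and success probability follow immediately from the single bound $\coh(\varX)\le C\,d/n$; the rest of the argument proves that bound.

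By definition $\coh(\varX)=\sup\{\max_{i<j}\|P_{T_{D}\varX}E_{ij}\|^{2}:D\in\varX\ \text{generic}\}$, so I must control, uniformly in $i<j$ and over generic configurations, the projection of a coordinate vector onto the tangent space $T_{D}\varX=\{(2\iprod{p_{i}-p_{j}}{v_{i}-v_{j}})_{i<j}:v\in(\RR^{d})^{n}\}$ --- the row space of the classical rigidity matrix, whose kernel is the $\binom{d+1}{2}$-dimensional space of infinitesimal isometries, so $\dim\varX=dn-\binom{d+1}{2}$ and already $\coh(\varX)\ge\dim\varX/N=\Theta(d/n)$, i.e.\ the bound sought is tight up to the constant. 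The key device is classical (double) centering: with $J=I-\tfrac1n\mathbf 1\mathbf 1^{\top}$, the map $D\mapsto\widetilde G:=-\tfrac12\,JDJ$ sends $\varX$ into the variety $\calY$ of symmetric positive semidefinite rank-$\le d$ matrices annihilating $\mathbf 1$, while conversely $D=\widetilde L(\widetilde G)$ with $\widetilde L(G):=\mathbf 1(\operatorname{diag}G)^{\top}+(\operatorname{diag}G)\mathbf 1^{\top}-2G$ \emph{linear}. Thus $T_{D}\varX\subseteq\widetilde L(T_{\widetilde G}\calY)$; and since $-\tfrac12 J(\cdot)J$ is a contraction that left-inverts $\widetilde L$ on the linear subspace $\{G=G^{\top}:G\mathbf 1=0\}$ containing every $T_{\widetilde G}\calY$, we get $\|\widetilde L(G)\|\ge 2\|G\|$ for all such $G$.

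Now chase the projection through $\widetilde L$. For $G\in T_{\widetilde G}\calY$,
\[
\frac{\iprod{G}{\widetilde L^{*}E_{ij}}^{2}}{\|\widetilde L(G)\|^{2}}\ \le\ \frac{\|P_{T_{\widetilde G}\calY}(\widetilde L^{*}E_{ij})\|^{2}\,\|G\|^{2}}{4\,\|G\|^{2}},
\]
and $\widetilde L^{*}E_{ij}=E_{ii}+E_{jj}-2E_{ij}$ is a fixed combination of at most three coordinate directions with $O(1)$ coefficients, so $\|P_{T_{\widetilde G}\calY}(\widetilde L^{*}E_{ij})\|\le O(1)\cdot\max_{a,b}\|P_{T_{\widetilde G}\calY}E_{ab}\|=O(1)\sqrt{\coh(\calY)}$; taking the supremum over $G$ and using $T_{D}\varX\subseteq\widetilde L(T_{\widetilde G}\calY)$ gives $\|P_{T_{D}\varX}E_{ij}\|^{2}\le O(\coh(\calY))$. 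Finally $\coh(\calY)=O(d/n)$: the tangent space of $\calY$ at the generic point $\widetilde G=-\tfrac12 JDJ$ is $\{QP^{\top}+PQ^{\top}:Q\}$, where $P$ is the full-rank centered configuration matrix whose $d$-dimensional column space is generic, and for such tangent spaces $\max_{a,b}\|P_{T_{\widetilde G}\calY}E_{ab}\|^{2}=O(d/n)$ by the coherence computation for symmetric (kernel) low-rank matrices underlying the $m=n$ analogue of Theorem~\ref{Thm:generic-low-rank}. Altogether $\coh(\varX)\le C\,d/n$, and Theorem~\ref{Thm:cohp} finishes the proof.

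The one substantial step is the tangent-space estimate of the previous paragraph --- equivalently, the anticoncentration statement that no infinitesimal flex of $\varX$ concentrates more than an $O(d/n)$ fraction of its energy on a single entry of $D$. I expect this to be the only real obstacle, and the reduction above is designed precisely so that it is imported wholesale from the already-established low-rank coherence bound, the new inputs being only the inclusion $\varX\subseteq\widetilde L(\calY)$ with the contraction property of $-\tfrac12 J(\cdot)J$, and the $O(1)$-sparsity of $\widetilde L^{*}$ on coordinate vectors. A self-contained alternative would be to prove directly that for a generic configuration the rigidity operator $v\mapsto(2\iprod{p_{i}-p_{j}}{v_{i}-v_{j}})_{i<j}$ is, modulo infinitesimal isometries, well-conditioned with a bound depending only on $d$, while its operator norm is $\Theta(\sqrt n)$ --- the $\sqrt n$ gap between these two scales being exactly the origin of the $1/n$ in $\coh(\varX)$; making this precise for algebraically generic (as opposed to random Gaussian) configurations is the delicate point there.
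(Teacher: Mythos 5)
Your double-centering route is genuinely different from the paper's. Where the paper relates $\CayMen(n,d)$ to the \emph{normalized} variety $\DetVSymN(n,d+1)$ by lifting points $x_i\in\RR^d$ to the sphere via $\nu_h$ and then letting $h\to\infty$ so that tangent flats converge in the Grassmannian (Lemma~\ref{Lem:tancaydet} feeding Proposition~\ref{Prop:CayDetcoh}), you use the exact MDS bijection $\widetilde G=-\tfrac12 JDJ$, $D=\widetilde L(\widetilde G)$ between distance matrices and \emph{centered} rank-$\le d$ Gram matrices. This replaces a limiting argument by a fixed, sparse linear conjugation, and your estimate --- $\|\widetilde L(G)\|\ge 2\|G\|$ on $\{G\mathbf 1=0\}$ via the left-inverse $-\tfrac12 J(\cdot)J$, together with $\widetilde L^*E_{ij}$ touching only $O(1)$ coordinate directions --- is a clean way to transport coherence from the Gram side to the distance side, up to a universal constant. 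Both arguments terminate at the same place, a coherence bound $O(d/n)$ for symmetric rank-$\le d$ matrices.

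However, there is a gap, and it comes from misstating the definition of coherence. You write $\coh(\varX)$ as a \emph{supremum} over generic points, but the paper defines $\coh(\varX)=\inf_{x\in\Sm(\varX)}\max_i\|\calP(e_i)\|^2$, an \emph{infimum}. This is precisely what makes Theorem~\ref{Thm:cohp} go through: one locates a \emph{single} low-coherence smooth point, proves injectivity of the sampled tangent map there, and then Lemma~\ref{Lem:injty} (the equivalence of finite generic fibre with local injectivity) propagates the conclusion to generic $x$. Because you took the wrong extremum, you end up believing you must show that an algebraically generic configuration $P$ satisfies $\max_{a,b}\|P_{T_{\widetilde G}\calY}E_{ab}\|^2=O(d/n)$, and you correctly flag this as delicate --- in fact it is false as a Zariski-generic statement, since coherence is continuous and can be arbitrarily close to $1$ on an open set of configurations. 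What is actually needed is the \emph{existence} of one full-rank $n\times d$ matrix $P$ with $P^\top\mathbf 1=0$ and incoherent column span, and this is easy: take, say, the real Fourier vectors at nonzero frequencies $1,\dots,\lceil d/2\rceil$, or the construction of \cite[Lemma 2.2]{CR09} used in Proposition~\ref{Prop:DetVSymNCohbound}. With that correction, your reduction closes cleanly and yields the same $\coh(\CayMen(n,d))\le Cd/n$ as the paper.
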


In the language of rigidity theory~\citep{KMT11}, Theorem \ref{Thm:rigidity} says that with the
stated sampling rate $\srate$, the random graph $G_{n}(\srate)$ is generically rigid w.h.p. Because the minimum degree of a graph that is generically rigid in dimension $d$ must be at least $d$, the order of the lower bound on $\srate$ cannot be improved by more than a factor of $\log n$ - again,
this can be seen as a coupon collector's argument. Our result can be seen as a density extension of
Laman's Theorem~\citep{L70} to dimensions $d\ge 3$, which is known \cite{JSS07}
to imply a necessary and sufficient bound on the sampling density $\srate \ge n^{-1}(\log n + 2\log\log n + \omega(1))$
in dimension $2$. We will also argue that similar results hold for kernel distance matrices as well.

\subsection{Fixed coordinates and the logarithmic term}\label{sec:intro.fixed}
Before continuing, we want to highlight an important
conceptual point.  Since all the projections are linear, it could
be counter-intuitive that the number of measurements needed for reconstruction
is on the order of $\dim(\calX)\log n$ and not simply $\dim(\calX)$, especially
in light of the following (probably folklore) theorem, proven in the Appendix:
\begin{Thm}\label{Thm:randproj}
Let $\varX\subseteq\mathbb{K}^n$ be an algebraic variety of dimension $d$,
let $x\in\varX$. Let $\ell:\mathbb{K}^n\rightarrow\mathbb{K}^m$ be a generic linear map. If $m > d$,
then $x$ is uniquely determined by the values of $\ell(x)$, and the condition that $x\in\varX.$
\end{Thm}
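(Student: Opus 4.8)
The plan is to prove the slightly stronger assertion that for a generic linear map $\ell$ (which is allowed to depend on $x$) the fiber $\ell^{-1}(\ell(x))\cap\varX$ is exactly $\{x\}$; this gives the stated unique determination, and in particular the finiteness invoked in Theorem~\ref{Thm:cohp}. First I would make two harmless reductions. Replacing $\varX$ by its Zariski closure does not change $\dim\varX=d$ and can only enlarge the fiber through $x$, so assume $\varX$ closed. If $m\ge n$ then a generic $\ell$ already has rank $n$ and is injective, and if $\varX=\{x\}$ there is nothing to prove, so from now on assume $m<n$ and $\varX\neq\{x\}$.

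Now translate the statement. For $y\in\varX$ we have $\ell(y)=\ell(x)$ iff $y-x\in\Ker\ell$, so, writing $V:=\varX-x=\{\,y-x:y\in\varX\,\}$, an affine variety of dimension $d$ containing the origin, the claim becomes $V\cap\Ker\ell=\{0\}$. Since $V$ and $\Ker\ell$ both contain $0$, I cannot simply ask a generic linear subspace to miss $V$; instead I pass to directions. Let $\mathbb{P}(V)\subseteq\mathbb{P}^{n-1}$ be the Zariski closure of the image of $V\setminus\{0\}$ under $\mathbb{K}^{n}\setminus\{0\}\to\mathbb{P}^{n-1}$; as the closure of the image of a variety of dimension $\le d$ under a morphism, $\mathbb{P}(V)$ has dimension $\le d$. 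If $\mathbb{P}(V)\cap\mathbb{P}(\Ker\ell)=\emptyset$ then $V\cap\Ker\ell$ contains no nonzero vector, hence equals $\{0\}$; so it is enough to force this disjointness for generic $\ell$.

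For the disjointness I use the standard generic-linear-section estimate: a generic linear subspace $L\subseteq\mathbb{P}^{n-1}$ of dimension $k$ is disjoint from a fixed subvariety $Y\subseteq\mathbb{P}^{n-1}$ as soon as $k+\dim Y<n-1$. Indeed, in the incidence variety $\{(p,L):p\in Y,\ p\in L\}$ the fiber over a point $p$ is the set of $k$-planes through $p$, of dimension $k(n-1-k)$, so the incidence variety has dimension $\dim Y+k(n-1-k)$, which under the hypothesis is strictly below the dimension $(k+1)(n-1-k)$ of the Grassmannian of $k$-planes; hence its projection to the Grassmannian is not dominant and the complement of the image is a nonempty Zariski-open set of admissible $L$. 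Applying this with $Y=\mathbb{P}(V)$ and $k=\dim\mathbb{P}(\Ker\ell)=n-m-1$ (here $\rk\ell=m$, which holds for generic $\ell$ since $m<n$), the hypothesis $m>d$ gives $k+\dim\mathbb{P}(V)\le(n-m-1)+d<n-1$, so admissible $k$-planes form a nonempty open subset of the Grassmannian. Since $\ell\mapsto\mathbb{P}(\Ker\ell)$ is a surjective, hence dominant, morphism from the open locus $\{\rk\ell=m\}\subseteq\mathbb{K}^{mn}$ onto that Grassmannian, a generic $\ell$ lands in the admissible set; thus $\mathbb{P}(V)\cap\mathbb{P}(\Ker\ell)=\emptyset$, so $V\cap\Ker\ell=\{0\}$, and $\ell^{-1}(\ell(x))\cap\varX=\{x\}$.

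The only genuinely delicate step is the passage to $\mathbb{P}^{n-1}$: because $V=\varX-x$ always contains $0$, the correct object is the projectivized cone of directions of $V$, not $V$ itself, and one has to verify that its dimension is still at most $d$ (it drops to $d-1$ precisely when $V$ is a cone). The remainder is routine incidence-variety bookkeeping. Note finally that running the same count with $\bigcup_{x\in\varX}(\varX-x)$, of dimension at most $2d$, in place of a single $\varX-x$ produces the weaker bound $m>2d$ sufficient for an $\ell$ that is injective on all of $\varX$ at once; this explains why the theorem must fix $x$ before choosing $\ell$.
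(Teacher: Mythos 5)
Your proof is correct, and it takes a genuinely different route from the paper's. The paper argues directly about the intersection of $\varX$ with the generic affine flat $H=\Omega^{-1}(y)$, invoking a ``height-theorem-like'' codimension formula $\codim(\varX\cap H)=\codim\varX+\codim H$ to get generic finiteness of fibers, and then separates the finitely many preimages by an additional generic coordinate. You instead translate so that the flat is forced through the origin, observe that the claim is $V\cap\Ker\ell=\{0\}$ for $V=\varX-x$, projectivize to strip out the unavoidable intersection at $0$, and finish with an explicit incidence-variety count in the Grassmannian of $\Ker\ell$'s. Your version has two concrete advantages: it replaces a cited codimension theorem with a self-contained dimension count, and --- because you fix $x$ first and then choose $\ell$ --- it works for \emph{every} $x\in\varX$ rather than only generic or smooth $x$, and yields $\varX\cap\ell^{-1}(\ell(x))=\{x\}$ in one step rather than finiteness followed by a separation argument. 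The paper's version is shorter and has the merit of simultaneously proving the sharper ``finite iff $m\ge d$'' part of Theorem~\ref{Thm:randprojApp}, which your argument does not directly address (it only gives the easy direction $m>d\Rightarrow$ singleton). Your closing remark that a single $\ell$ injective on all of $\varX$ costs $m>2d$ via the difference variety $\bigcup_x(\varX-x)$ is a useful clarification of why $\ell$ is allowed to depend on $x$ in the statement. One minor caveat worth a sentence if this were written up: over $\KK=\RR$, ``not dominant'' gives an image inside a proper Zariski-closed set and the genericity argument still goes through because a nonzero polynomial cannot vanish on a Euclidean-open set, but this is cleanest to see by complexifying; the paper glosses over this as well.
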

Thus, we could guess naïvely that $\dim(\calX)$ total samples are enough.
The subtlety that the naïve guess misses is that we are dealing with \emph{coordinate} projections,
and that $\calX$ can be inherently aligned with the coordinate system in a way that
requires more samples.
Consider the case of a linear space $\calX$, as in Theorem \ref{Thm:linear} and
assume that $n/k$ is an integer. Let $\calX$ be spanned by $k$ vectors $b_1,\ldots, b_k$
which are supported on disjoint sets of $n/k$ coordinates and have $\sqrt{k/n}$ in the non-zero
coordinates.  It is easy to see that $\coh(\calX) = \frac{k}{n}$, which is minimal.  However,
to have any hope of reconstructing a point $x$, we need to measure at least one coordinate in the
support of each $b_i$. A coupon collector's argument then shows that, indeed, $\Omega(k\log k)$
samples are required, and this is $\Omega(k\log n)$ when $k = n^\epsilon$.  At the other
extreme, if $\calX$ is spanned by coordinate vectors $e_1,\ldots,e_k$, then $\Omega(n\log n)$
samples are needed.
Examples like these illustrate why coherence is the right concept: it depends on the
coordinate system chosen for $\KK^n$ and the embedding of $\calX$.  Dimension, on
the other hand, is intrinsic to $\calX$, so it can't capture the behavior of coordinate
projections.

\subsection{Acknowledgements}
FK is supported by Mathematisches Forschungsinstitut Oberwolfach (MFO),
and LT by the European Research Council under the European Union's Seventh Framework
Programme (FP7/2007-2013) / ERC grant agreement no 247029-SDModels.

\section{Coherence and Signal Reconstruction}\label{Sec:coherence}
\subsection{Coherence and Bounds on Coherence}\label{Sec:coherence.bounds}
In this section, we introduce our concepts, and define what the coherence should be. As discussed in section~\ref{Sec:intro.sampling}, the sampling process consists of randomly and independently observing coordinates of the signal without repetition, and this is no restriction of generality, as we have also discussed there.

\begin{Def}\label{Def:ransmp}
Let $\varX\subseteq \KK^{n}$ be an analytic variety. Fix coordinates $(X_1,\dots, X_{n})$ for $\KK^{n}.$ Let $\smp(\srate)$ be a the Bernouilli random experiment yielding a random subset of $\{X_1,\dots, X_{n}\}$ where each $X_i$ is contained in $\smp(\srate)$ independently with probability $\srate$ (the sampling density). We will call the projection map $\Omega : \varX\to \varY$ defined by $(x_1,\dots, x_n)\mapsto (\dots, x_i,\dots\;:\; X_i\in \smp (\srate))$ of $\varX$ onto the coordinates selected by $\smp (\srate)$, which is an analytic-map-valued random variable, a \emph{random sample} of $\varX$ with \emph{sampling rate} $\srate$.
\end{Def}
The coherence takes the place of the factor of oversampling needed to guarantee reconstruction.
Intuitively, it can be also interpreted as the infinitesimal randomness of a signal. We define it first for linear
sampling, as we have in the case of the bandlimited signal discussed in
section~\ref{Sec:intro.sampling}.  Figure \ref{fig:coh} (a) gives a schematic of the concept.

\begin{Def}\label{Def:coh}
Let $H\subseteq \KK^n$ be a $k$-dimensional affine space (for short, a $k$-flat).
Let $\calP:\KK^n\rightarrow H\subseteq \KK^n$ the unitary projection operator onto $H$, let $e_1,\dots, e_n$ be a
fixed orthonormal basis of $\KK^n.$ Then the {\it coherence} of $H$ with respect to the basis $e_1,\dots, e_n$ is defined as
$$\coh (H) = \max_{1\le i\le n} \|\calP (e_i)-\calP (0)\|^2.$$
When not stated otherwise, the basis $e_i$ will be the canonical basis of the ambient space.
\end{Def}
Note that coherence is always coherence with respect to the \emph{fixed} coordinate system of the sampling regime,
and this will be understood in what follows.
\begin{Rem}\label{Rem:cohamb}
Let $H\subseteq \KK^n$ be a $k$-flat. Then the coherence $\coh(H)$ does not depend on whether we consider $H$ as a $k$-flat in $\KK^n$, or as a $k$-flat in $\KK^m\supseteq \KK^n$ for $m\ge n$ (assuming the chosen basis of $\KK^m$ contains the basis of $\KK^n$). Moreover, if $H\subseteq \mathbb{R}^n$, the coherence of $H$ equals that of the complex closure of $H$. Therefore, while coherence depends on the choice of coordinate system, it is invariant under extensions of the coordinate system.
\begin{figure*}[htbp]
\centering
\subfigure[]{\includegraphics[width=0.2 \textwidth]{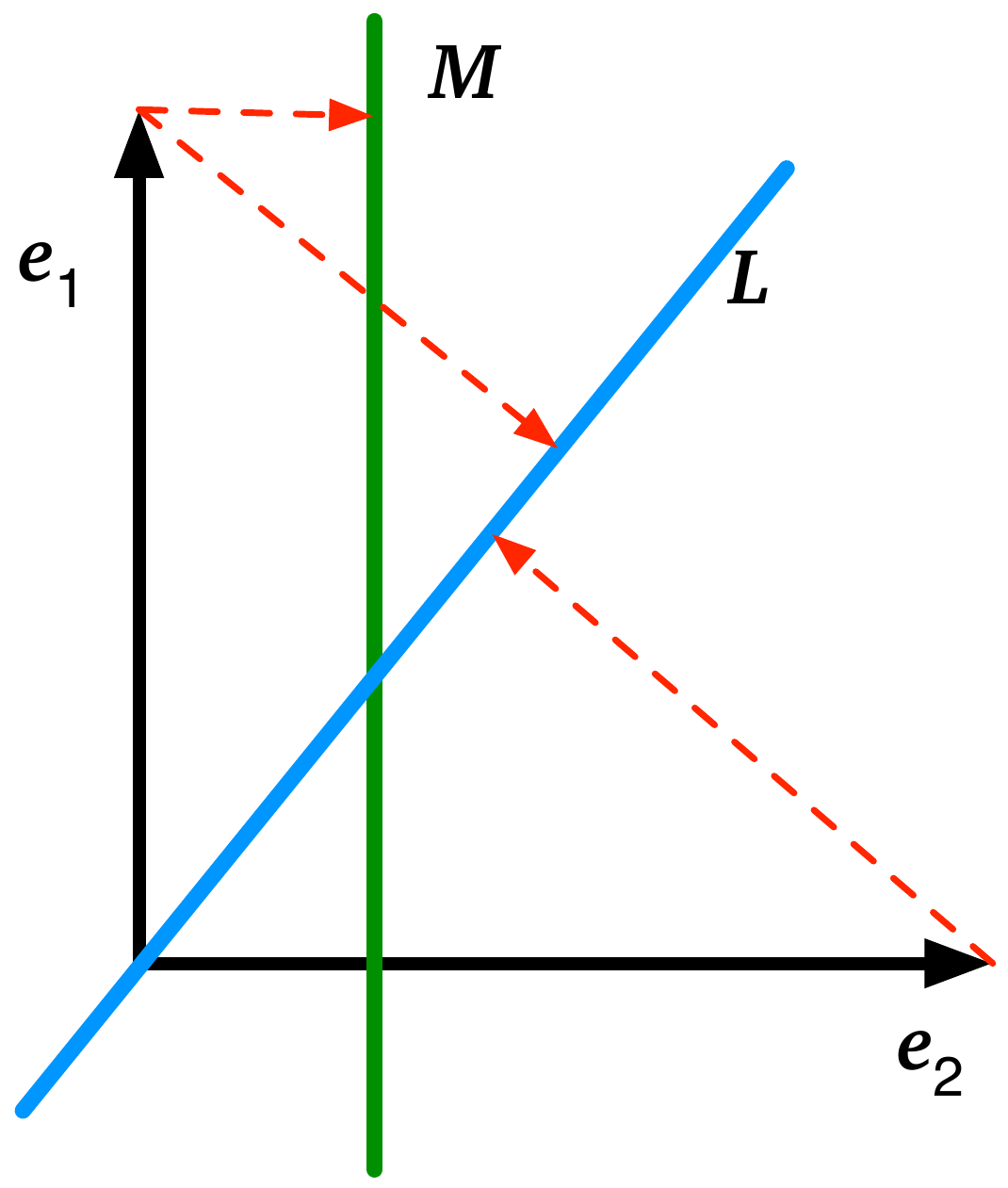}}
\hspace{1 in}
\subfigure[]{\includegraphics[width=0.2 \textwidth]{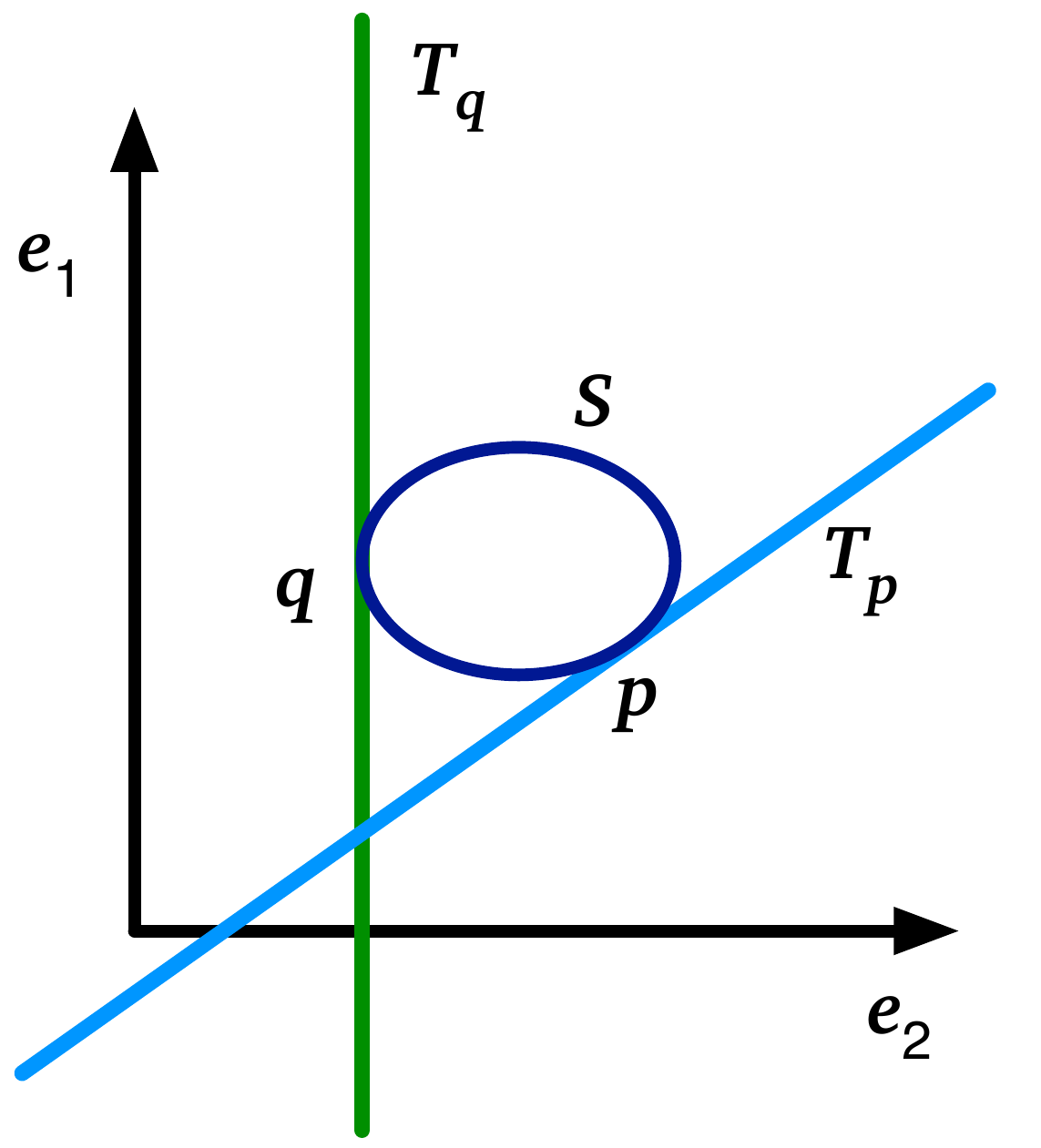}}
\caption{Schematic of coherent and incoherent spaces: (a) The projections of the coordinate
vectors onto the linear space $L$ are roughly the same size, so it has nearly minimal coherence.  The
flat $M$ is a translate of the span of $e_1$, giving it maximal coherence; observing the $e_2$
coordinate gives no information about any point in $M$.  (b) A variety $S$ in $\RR^2$ and the tangent
flats at two points.  The coherence of $S$ is close to minimal, as witnessed by the point $p$.}
\label{fig:coh}
\end{figure*}
\end{Rem}
A crucial property of the coherence is that it is bounded in both directions:
\begin{Prop}\label{Prop:cohbnds}
Let $H$ be a $k$-flat in $\KK^n.$ Then,
$$\frac{k}{n}\le\coh (H) \le 1,$$
and both bounds are achieved.
\end{Prop}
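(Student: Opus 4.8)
The plan is to reduce the whole statement to a fact about the orthogonal projection onto the \emph{linear part} of $H$, and then read off the two inequalities from, respectively, the contractivity of that projection and the identity $\Tr = \rk$.

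First I would write $H = v + L$ with $L\subseteq\KK^n$ a $k$-dimensional linear subspace, and let $\calP_L$ be the orthogonal projection onto $L$. From the nearest-point characterisation one gets $\calP(x) = v + \calP_L(x-v)$, hence $\calP(e_i) - \calP(0) = \calP_L(e_i)$ for every $i$, so that $\coh(H) = \max_{1\le i\le n}\|\calP_L(e_i)\|^2$. At this point $v$ has disappeared and we are left with a question about the $k$-dimensional subspace $L$ only.

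For the upper bound, $\calP_L$ is an orthogonal projection, hence norm-nonincreasing, so $\|\calP_L(e_i)\|\le\|e_i\| = 1$, giving $\coh(H)\le 1$; equality is attained for every $k$ by taking $L = \spn(e_1,\dots,e_k)$, since then $\calP_L(e_1) = e_1$. For the lower bound I would use $\|\calP_L(e_i)\|^2 = \langle \calP_L(e_i),e_i\rangle = (\calP_L)_{ii}$, the $i$-th diagonal entry of the matrix of $\calP_L$ in the fixed basis; summing over $i$ gives $\sum_{i=1}^n\|\calP_L(e_i)\|^2 = \Tr\calP_L = \rk\calP_L = k$. Thus the largest of the $n$ summands is at least their average $k/n$, i.e.\ $\coh(H)\ge k/n$.

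It remains to exhibit a $k$-flat attaining $k/n$, and this is the only part that needs any work, since equality in the averaging step forces the diagonal of $\calP_L$ to be the constant $k/n$ — a non-generic condition, so a random subspace will not do. Over $\CC$ I would take $L$ spanned by any $k$ of the Fourier vectors $f_\omega = n^{-1/2}\bigl(e^{2\pi i\omega j/n}\bigr)_{j}$: these are orthonormal with every entry of modulus $n^{-1/2}$, so $(\calP_L)_{jj} = \sum_\omega|f_\omega(j)|^2 = k/n$. Over $\RR$ the same is achieved by a real circulant rank-$k$ projection (choose a negation-symmetric frequency set, adjusting at $\omega=0$ and $\omega=n/2$ to hit the right parity), or, when $k\mid n$, simply by the span of the disjointly supported vectors of Section~\ref{sec:intro.fixed}, whose projector again has constant diagonal $k/n$. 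The only obstacle is the mild bookkeeping needed to make this construction work for all pairs $(n,k)$; the two inequalities themselves are immediate from contractivity and the trace computation.
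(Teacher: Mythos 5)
Your proof is correct, and for the two inequalities plus the upper-bound tightness it is essentially the paper's argument: reduce to the linear case by translating $H$, get $\coh(H)\le 1$ from contractivity of the orthogonal projection, and get $\coh(H)\ge k/n$ by comparing the maximum to the average of the quantities $\|\calP_L(e_i)\|^2$. You phrase the averaging step via $\sum_i(\calP_L)_{ii}=\Tr\calP_L=\rk\calP_L=k$ and ``max $\ge$ mean'', while the paper phrases the same computation via $\|\calP\|_F^2=k$ and a contradiction; these are the identical fact.

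Where you genuinely diverge is in proving tightness of the lower bound. The paper handles $k\mid n$ with the disjointly supported vectors of Section~\ref{sec:intro.fixed} and then, for general $(n,k)$, cites the existence of finite unit-norm tight frames \cite{CasLeo06} without constructing them. You instead give an explicit harmonic construction: over $\CC$, span $L$ by any $k$ DFT vectors so the projector has constant diagonal $k/n$; over $\RR$, choose a negation-symmetric frequency set so that the resulting circulant projector is real, with the same constant diagonal. This is a valid, self-contained replacement for the citation --- indeed the rescaled rows of your $n\times k$ matrix of DFT columns are exactly a unit-norm tight (harmonic) frame, so your construction is a concrete instance of the existence statement the paper invokes. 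The trade-off is that your route requires a bit of case analysis over $\RR$ to realise every pair $(n,k)$, bookkeeping that the cited frame result absorbs wholesale; in exchange your version is elementary and does not lean on an external reference.
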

\begin{proof}
Without loss of generality, we can assume that $0\in H$ and therefore that $\calP$ is linear, since coherence, as defined in Definition~\ref{Def:coh}, is invariant under translation of $H$.
First we show the upper bound. For that, note that for an orthogonal projection operator $\calP:\KK^n\rightarrow \KK^n$ and any $x\in\KK^n$, one has $\|\calP (x)\|\le \|x\|.$ Thus, by definition,
$\coh (H) = \max_{1\le i\le n} \|\calP (e_i)\|^2\le \max_{1\le i\le n} \|e_i\|^2 =1.$
For strictness, take $H$ as the span of $e_1,\dots, e_k.$
Now we show the lower bound. We proceed by contradiction. Assume $\|P (e_i)\|^2 < \frac{k}{n}$ for all $i.$ This would imply
$ k = n\cdot\frac{k}{n} > \sum_{i=1}^n \|\calP (e_i)\|^2 = \|\calP \|_F^2 =k$
which is a contradiction, where in the last equality we used the fact that orthonormal projections onto a $k$-dimensional space have
Frobenius norm $k$.
When $n/k$ is an integer, the tightness of the lower bound follows from the example in section \ref{sec:intro.fixed}.
In general, it follows
from the existence of finite tight
frames\footnote{We cordially thank Andriy Bondarenko for pointing this out.} \cite{CasLeo06}.
\end{proof}

We extend the coherence to arbitrary manifolds by minimizing over tangent spaces; see Figure \ref{fig:coh} (b)
for an example.
\begin{Def}\label{Def:cohX}
Let $\varX\subseteq\KK^n$ be an (real or complex) irreducible analytic variety of dimension $d$ (affine or projective).
Let $x\in\varX$ a smooth point, and let $T_{\varX, x}$ be the tangent $d$-flat of $\varX$ at $x$. We define
$$\coh (x\in\varX) := \coh (T_{\varX, x}).$$
If it is clear from the context in which variety we consider $x$ to be contained, we also write $\coh (x)=\coh(x\in\varX).$ Furthermore, we define the coherence of $\varX$ to be
$$\coh (\varX) := \inf_{x\in\Sm(\varX)}\coh (x),$$
where $\Sm(\varX)$ denotes the set of smooth points (=the so-called smooth locus) of $\varX$.
\end{Def}

Remark~\ref{Rem:cohamb} again implies that the coherence $\coh (\varX)$ is invariant under the choice of ambient space and depends only on the coordinate system. Also, if $\varX$ is a $k$-flat, then the definitions of $\coh (\varX)$, given by Definitions~\ref{Def:coh} and \ref{Def:cohX} agree.
Therefore, we again obtain:

\begin{Prop}\label{Prop:cohbndsvar}
Let $\varX\subseteq\KK^n$ be an irreducible analytic variety. Then,
$\frac{1}{n}\dim\varX\le\coh (\varX) \le 1$, and both bounds are tight.
\end{Prop}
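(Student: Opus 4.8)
The plan is to reduce the statement to the flat case already handled by Proposition~\ref{Prop:cohbnds}, since by Definition~\ref{Def:cohX} the coherence of $\varX$ is assembled from the coherences of its tangent flats.

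First I would fix an arbitrary smooth point $x\in\Sm(\varX)$. As $\varX$ is irreducible of dimension $d=\dim\varX$ and $x$ is smooth, the tangent space $T_{\varX,x}$ is a $d$-flat in $\KK^n$; hence Proposition~\ref{Prop:cohbnds}, applied with $H=T_{\varX,x}$ and $k=d$, gives $\tfrac{d}{n}\le\coh(x\in\varX)=\coh(T_{\varX,x})\le 1$. Since this holds at every point of $\Sm(\varX)$, and $\Sm(\varX)$ is nonempty (the singular locus of an irreducible variety is a proper closed subvariety), taking the infimum over $x\in\Sm(\varX)$ preserves both one-sided bounds and yields $\tfrac{d}{n}\le\coh(\varX)\le 1$. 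No further structure of $\varX$ is used, and note that the infimum need not be attained for general $\varX$.

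For tightness it is enough to exhibit irreducible analytic varieties achieving each bound, and affine flats already do. A $k$-flat $H$ is an irreducible analytic variety whose tangent space at each point is a translate of $H$, so $\coh(x\in H)$ is independent of $x$ and the value $\coh(H)$ of Definition~\ref{Def:cohX} coincides with that of Definition~\ref{Def:coh} (as remarked right after Definition~\ref{Def:cohX}). Thus any flat realizing the lower bound $\tfrac{k}{n}$ of Proposition~\ref{Prop:cohbnds} --- e.g.\ the span of vectors with disjoint supports from Section~\ref{sec:intro.fixed} when $n/k\in\NN$, or the span of a finite tight frame in general --- has $\coh(\varX)=\tfrac{k}{n}=\tfrac{\dim\varX}{n}$, while any translate of a coordinate subspace has $\coh(\varX)=1$.

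There is no substantive obstacle in this argument: it is essentially a corollary of Proposition~\ref{Prop:cohbnds}. The only points requiring care are the bookkeeping that $\dim T_{\varX,x}=\dim\varX$ exactly (so that the $k$ appearing in Proposition~\ref{Prop:cohbnds} is $d$), the nonemptiness of the smooth locus, and the observation that passing to an infimum preserves both inequalities but may destroy attainment; if one insisted on a tightness witness that is not a flat, one could instead compute the coherence of a low-rank matrix or distance variety via the later sections, but this is unnecessary here.
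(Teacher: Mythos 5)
Your argument is correct and follows the same route as the paper: reduce to the flat case of Proposition~\ref{Prop:cohbnds} via the tangent $d$-flat at each smooth point (using irreducibility to fix the dimension), observe that the pointwise bounds survive the infimum, and use flats as tightness witnesses since they are themselves irreducible analytic varieties on which Definitions~\ref{Def:coh} and~\ref{Def:cohX} agree. The extra bookkeeping you supply --- nonemptiness of the smooth locus, the remark that the infimum need not be attained --- is sound and merely makes explicit what the paper's terser proof leaves implicit.
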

\begin{proof}
Let $d = \dim\varX$.  Irreducibility of $X$ implies that, at each smooth point $x\in \Sm(\varX)$, the tangent
space $T_{\varX, x}$ is a $d$-flat in $\KK^n$. Both bounds and their tightness then follow from Proposition \ref{Prop:cohbnds}.
\end{proof}

\begin{Def}\label{Def:maxinc}
An analytic variety $\varX\subseteq \KK^n$ is called {\it maximally incoherent} if
$\coh (\varX)=\frac{1}{n}\dim\varX.$
\end{Def}

\subsection{The Main Theorem}\label{Sec:coherence.mainthm}
With all concepts in place, we state our main result, which we recall from the introduction.
\cohpthm*
\begin{proof}
By the definition of coherence, for every $\delta > 0$, there exists an $x$ such that $\varX$ is smooth at $x$, and $\coh(x)\le (1+\delta)\coh (\varX).$ Now let $y = \Omega(x)$, we can assume by possible changing $x$ that $\Omega(\varX)$ is also smooth at $y$. Let $T_y,T_x$ be the respective tangent spaces at $y$ and $x$. Note that $y$ is a point-valued discrete random variable, and $T_y$ is a flat-valued random variable. By the equivalence of the statements (iv) and (v) in Lemma~\ref{Lem:injty}, it suffices to show that the operator
$$P = \srate^{-1}\theta\circ \diff\Omega - \id$$
is contractive, where $\theta$ is projection, from $T_y$ onto $T_x$, with probability at least $1-3n^{-\lambda}$ under the assumptions on $\srate$. Let $Z=\|P\|,$ and let $e_1,\dots, e_n$ be the orthonormal coordinate system for $\mathbb{C}^n$, and $\calP$ the projection onto $T_x$. Then the projection $\theta\circ \diff\Omega$ has, when we consider $T_x$ to be embedded into $\mathbb{C}^n$, the matrix representation
{\small
$$\sum_{i=1}^n\varepsilon_i\cdot \calP (e_i)\otimes \calP (e_i),$$
}
where $\varepsilon_i$ are independent Bernoulli random variables with probability $\srate$ for $1$ and $(1-\srate)$ for $0$. Thus, in matrix representation,
{\small
$$P=\sum_{i=1}^n\left(\frac{\varepsilon_i}{\srate}-1\right)\cdot \calP (e_i)\otimes \calP (e_i).$$
}
By Rudelson's Lemma~\ref{Lem:Rudelson}, it follows that
{\small
$$\mathbb{E} (Z) \le C \sqrt{\frac{\log n}{\srate}} \max_i \|\calP (e_i)\|$$
}
for an absolute constant $C$ provided the right hand side is smaller than $1$. The latter is true if and only if
$$\srate\ge C^{-2} \log n \max_i \|\calP (e_i)\|^2.$$
Now let $U$ be an open neighborhood of $x$ such that $\coh (z) < (1+\delta)\coh(\varX)$ for all $z\in U$. Then, one can write
{\small
$$Z=\sup_{y_1,y_2\in U'}\left\|\sum_{i=1}^n\left(\frac{\varepsilon_i}{\srate}-1\right)\cdot \langle y_1,\calP (e_i)\rangle \langle y_2, \calP (e_i)\rangle\right\|$$
}
with a countable subset $U'\subsetneq U.$ By construction of $U'$, one has
{\small
$$\left\|\left(\frac{\varepsilon_i}{\srate}-1\right)\cdot \langle y_1,\calP (e_i)\rangle \langle y_2, \calP (e_i)\rangle\right\|\le \srate^{-1}(1+\delta)\coh (\varX).$$
}
Applying Talagrand's Inequality in the form \citep[Theorem 9.1]{CR09}, one obtains
{\small
$$P(\|Z-\mathbb{E}(Z)\|>t)\le 3 \exp\left(-\frac{t}{KB}\log \left(1+\frac{t}{2}\right)\right)$$
}
with an absolute constant $K$ and $B= \srate^{-1}(1+\delta)\coh (\varX).$ Since $\delta$ was arbitrary, it follows that
{\small
$$P(\|Z-\mathbb{E}(Z)\|>t)< 3 \exp\left(-\frac{\srate \cdot t}{K \coh(\varX)}\log \left(1+\frac{t}{2}\right)\right).$$
}
Substituting $\srate= C\cdot \lambda'\cdot \coh (\varX)\cdot  \log n$, and proceeding as in the proof of Theorem 4.2 in \citep{CR09} (while changing absolute constants), one arrives at the statement.
\end{proof}
\begin{Rem}
That the manifold $\calX$ in the theorem needs to be algebraic is no major restriction,
since in the cases we are going to consider, the dependencies in $\calX$ will be algebraic,
or can be made algebraic by a canonical transform.  Moreover, Theorem \ref{Thm:cohp}
cannot be expected to hold for general analytic manifolds, since one might ``piece together''
pieces of manifolds with different identifiability characteristic.  For such an
object there is not global, prototypical generic behavior.
\end{Rem}

\begin{Rem}
By the bounds given in Proposition~\ref{Prop:cohbndsvar}, the best obtainable bound in Theorem~\ref{Thm:cohp} is
$\rho \ge C\cdot \lambda\cdot \dim(\varX)\cdot n^{-1} \log n,$ with $\lambda\ge 1$, in the case where $\varX$ is maximally incoherent.
\end{Rem}

\subsection{Coherence of subvarieties and secants}\label{Sec:coherence.subvars}
In the following section, we derive some further results how coherence behaves under restriction, and summation of signals, which will prove useful for computing or bouding coherence in specific examples.

\begin{Lem}\label{Lem:cohboundflat}
Let $H\subseteq \KK^n$ be a $k$-flat, let $\varX\subseteq H$ be a subvariety. Then,
$\coh (\varX)\le \coh(H).$
\end{Lem}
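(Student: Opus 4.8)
The plan is to reduce the statement about the variety $\varX$ to the corresponding statement about flats, via Definition~\ref{Def:cohX}, and then to dispatch the flat version by a one-line argument about compositions of orthogonal projections.

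First I would reduce to flats. Since $\varX\subseteq H$ and $H$ is a $k$-flat, at every smooth point $x\in\Sm(\varX)$ the tangent $d$-flat $T_{\varX,x}$ is contained in $H$: the point $x$ lies in $H$, and the linear part of $T_{\varX,x}$, being spanned by velocity vectors of analytic curves through $x$ that stay in $\varX\subseteq H$, lies in the linear part of $H$. Hence, by Definition~\ref{Def:cohX}, $\coh(x\in\varX)=\coh(T_{\varX,x})$, and it suffices to prove $\coh(T)\le\coh(H)$ for every flat $T\subseteq H$; taking the infimum over $x\in\Sm(\varX)$ then yields $\coh(\varX)=\inf_x\coh(T_{\varX,x})\le\coh(H)$.

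Next I would prove the flat version. Because coherence is translation-invariant (Definition~\ref{Def:coh}), I may translate so that $0\in T$, hence also $0\in H$, and then $\calP_T,\calP_H$ are linear orthogonal projection operators. The key identity is $\calP_T=\calP_T\calP_H$: writing $\KK^n=H\oplus H^\perp$ and using $T\subseteq H$ (so $H^\perp\subseteq T^\perp$), any $v=h+h^\perp$ with $h\in H$, $h^\perp\in H^\perp$ satisfies $\calP_T v=\calP_T h=\calP_T(\calP_H v)$. Since orthogonal projections are norm-nonincreasing (as already used in the proof of Proposition~\ref{Prop:cohbnds}), $\|\calP_T(e_i)\|=\|\calP_T(\calP_H(e_i))\|\le\|\calP_H(e_i)\|$ for each basis vector $e_i$; taking the maximum of the squares over $i$ gives $\coh(T)\le\coh(H)$.

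I do not anticipate a real obstacle: the only points needing a little care are the observation that the tangent flats of $\varX$ inherit containment in $H$, and the identity $\calP_T\calP_H=\calP_T$, both elementary. One should, however, be careful to invoke translation-invariance of coherence \emph{before} passing to linear projection operators, since the flats $T_{\varX,x}$ need not pass through the origin even when $\varX$ does.
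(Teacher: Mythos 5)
Your proof is correct and follows essentially the same route as the paper's: first establish the inequality for a flat $T\subseteq H$ (the paper states $\|\calP' x\|\le\|\calP x\|$ directly; you justify it via the identity $\calP_T=\calP_T\calP_H$), then transfer it to the variety by observing that each tangent flat $T_{\varX,x}$ is contained in $H$ and taking the infimum. The only difference is that you spell out a step the paper leaves implicit.
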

\begin{proof}
We first prove the statement for the case where $\varX$ is a flat; without loss of generality one can then assume that $0\in\varX$.
Let $\calP'$ be the unitary projection onto $X$, similarly $\calP$ the unitary projection onto $H$.
Since $X\subseteq H$, it holds that $\|\calP' x\|\le \|\calP x\|$ for any $x\in\KK^n$.
Thus, $\coh(\varX)\le\coh(H).$

The statement for the case where $\varX$ is an irreducible variety follows from the statement for vector spaces.
Namely, for $x\in \varX$, it implies $\coh(x\in\varX)\le\coh(H)$, since the tangent space of $\varX$ at $x$ is
contained in $H$. By taking the infimum, we obtain the statement.
\end{proof}

\begin{Lem}\label{Lem:sects}
Let $\varX,\varY\subseteq \KK^n$ be a analytic varieties, let
$\varX+\varY=\{\ x + y\;;\;x\in \varX, y\in \varY \}$
be the sum of $X$ and $Y$. Then,
$\coh (\varX)\le \coh(\varX + \varY).$
\end{Lem}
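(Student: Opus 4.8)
The plan is to show that at suitably generic smooth points $z=x+y$ of $\varX+\varY$, the tangent flat $T_{\varX+\varY,z}$ contains a parallel translate of $T_{\varX,x}$, deduce $\coh(x\in\varX)\le\coh(z\in\varX+\varY)$ from Lemma~\ref{Lem:cohboundflat} and the translation-invariance of coherence, and then transfer this inequality to the infimum defining $\coh(\varX+\varY)$ by a density-plus-continuity argument. One should first note that $\coh(\varX+\varY)$ is even meaningful: the closure of $\varX+\varY$ is the image of $\varX\times\varY$ under the addition map, hence irreducible, so Definition~\ref{Def:cohX} applies.

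For the tangent computation, I would fix an arbitrary point $y\in\varY$ and a smooth point $x\in\Sm(\varX)$, and set $z=x+y$. Given any vector $v$ in the linear part $D_x$ of $T_{\varX,x}$, pick (using that $x$ is a smooth point of $\varX$) an analytic curve $x(\cdot)$ in $\varX$ with $x(0)=x$ and $x'(0)=v$. Then $t\mapsto x(t)+y$ is an analytic curve lying in $\varX+\varY$ through $z$ with velocity $v$, so whenever $z$ is itself a smooth point of $\varX+\varY$ the vector $v$ lies in the linear part $D_z$ of $T_{\varX+\varY,z}$; hence $D_x\subseteq D_z$. Since coherence is translation invariant (as used in the proof of Proposition~\ref{Prop:cohbnds}), $\coh(x\in\varX)=\coh(D_x)$ and $\coh(z\in\varX+\varY)=\coh(D_z)$, so applying Lemma~\ref{Lem:cohboundflat} to the linear subspaces $D_x\subseteq D_z$ gives $\coh(x\in\varX)\le\coh(z\in\varX+\varY)$; taking the infimum over $x\in\Sm(\varX)$ yields $\coh(\varX)\le\coh(z\in\varX+\varY)$ for every such $z$. (At generic $x,y$ one in fact has $T_{\varX+\varY,z}$ parallel to $T_{\varX,x}+T_{\varY,y}$, which makes the inclusion transparent, but the curve argument is all that is needed and is valid at every smooth $z$ of the form $x+y$ with $x$ smooth on $\varX$.)

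It remains to transfer this to $\coh(\varX+\varY)=\inf_{z\in\Sm(\varX+\varY)}\coh(z)$. The set $S:=(\Sm(\varX)+\varY)\cap\Sm(\varX+\varY)$ is dense in $\Sm(\varX+\varY)$: $\Sm(\varX)$ is dense in $\varX$, so $\Sm(\varX)+\varY$ is dense in $\varX+\varY$ (translate a convergent sequence $a_k\to a$ in $\varX$ by a fixed $b\in\varY$), and a dense set stays dense when intersected with the open set $\Sm(\varX+\varY)$. Moreover $z\mapsto\coh(z\in\varX+\varY)$ is continuous on $\Sm(\varX+\varY)$, because $T_{\varX+\varY,z}$, the orthogonal projection $\calP$ onto its linear part, and the quantities $\|\calP(e_i)-\calP(0)\|^2$ all vary continuously with $z$. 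Hence $\coh(\varX+\varY)=\inf_{z\in S}\coh(z)\ge\coh(\varX)$ by the preceding paragraph. I expect this last transfer to be the crux: a given smooth point of $\varX+\varY$ need not be expressible as $x+y$ with $x$ a smooth point of $\varX$, so the clean tangent inclusion only holds on a dense, possibly proper, subset of $\Sm(\varX+\varY)$, and one genuinely needs continuity of coherence — rather than a single optimally chosen point — to close the argument.
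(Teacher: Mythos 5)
Your proposal is correct, and it follows the same core idea as the paper (tangent-flat inclusion combined with Lemma~\ref{Lem:cohboundflat}), but with an added layer of rigor that the paper's own proof skips. The paper simply asserts that an \emph{arbitrary} smooth $z\in\varX+\varY$ decomposes as $z=x(z)+y(z)$ with $x(z)\in\Sm(\varX)$ and $y(z)\in\Sm(\varY)$, and then writes $T_x\subseteq T_z$; no argument is given for why such a decomposition into smooth points must exist, nor for why the infimum over these $z$ controls $\coh(\varX+\varY)$. You have correctly identified this as the weak link: there is no a priori reason every smooth point of $\varX+\varY$ admits a decomposition with $x$ smooth on $\varX$, so the pointwise inequality $\coh(x(z))\le\coh(z)$ only holds on a subset of $\Sm(\varX+\varY)$.

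Your curve argument for $D_x\subseteq D_z$ is clean and is, presumably, the ``elementary calculation'' the paper leaves implicit, with the added care of working with linear parts of affine tangent flats so that Lemma~\ref{Lem:cohboundflat} and translation invariance apply cleanly. The transfer step — density of $S=(\Sm(\varX)+\varY)\cap\Sm(\varX+\varY)$ in $\Sm(\varX+\varY)$, plus continuity of $z\mapsto\coh(z)$ on the smooth locus, hence equality of the infimum over $S$ with the infimum defining $\coh(\varX+\varY)$ — closes the gap correctly. The only thing I would add for completeness is a sentence justifying the openness of $\Sm(\varX+\varY)$ in $\varX+\varY$ (standard, but you use it to conclude that $S$ is dense in $\Sm(\varX+\varY)$ rather than merely in $\varX+\varY$), and the observation that the infimum over a dense subset equals the infimum over the whole set for a continuous function, which you invoke tacitly. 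In short, what you do differently buys rigor at the spot where the paper hand-waves, at the cost of a short density-and-continuity digression; the underlying tangent-space mechanism is identical.
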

\begin{proof}
Denote $\varZ = \varX + \varY$, let $z\in\varZ$ be an arbitrary smooth point. By definition, there are smooth $x(z)\in\varX, y(z)\in \varY$ such that $z=x(z)+y(z)$. Let $T_z$ be the tangent space to $\varX+\varY$ at $z$, let $T_x$ be the tangent space of $\varX$ at $x(z)$. An elementary calculation shows $T_x\subseteq T_z$, thus $\coh(x(z))\le \coh(z)$ by Lemma~\ref{Lem:cohboundflat}. Since $z$ was arbitrary, we have
\(\coh(\varX)\le \inf_{z\in\Sm (\varZ)}\coh (x(z))\le \coh(\varZ) \).
\end{proof}

\begin{Rem}
In general, it is false that $\coh(\varX + \varY)\le \coh (\varX) + \coh (\varY).$ Consider for example $\varX = \spn ((1,1,1)^\top)$ and $\varY = \spn ((1,-1,-1)^\top)$.
\end{Rem}

\section{Coherence for Matrix Completion, Rigidity, and Kernels}\label{Sec:examples}
\subsection{Low-Rank Matrix Completion: the Determinantal Variety}\label{Sec:examples.lowrank}

In this section, we compute the coherence for completion of non-symmetric and symmetric bounded rank matrices, and then apply Theorem~\ref{Thm:cohp} to obtain boundary sampling rates for identifiability in matrix completion.

\begin{Def}
Denote by $\DetV (m\times n,r)$ the set of $(m\times n)$ matrices in $\KK$ of rank $r$ or less, and by $\DetVSym (n,r)$ the set of symmetric real resp.~Hermitian complex $(n\times n)$ matrices of rank $r$ or less, i.e.,
\begin{align*}
\DetV (m\times n,r) & = \left\{A\in\KK^{m\times n}\;;\;\rk A\le r \right\},\;\mbox{and}\\
\DetVSym (n,r) & = \left\{A\in\KK^{n\times n}\;;\;\rk A\le r, A^\dagger = A \right\}.
\end{align*}
Since the matrices in $\DetVSym (n,r)$ are symmetric resp.~Hermitian, we will consider it as canonically embedded in $\frac{1}{2}n(n+1)$-space.

$\DetV (m\times n,r)$ is called the {\it determinantal variety} of $(m\times n)$-matrices of rank (at most) $r$, $\DetVSym (n,r)$ the {\it determinantal variety} of symmetric $(n\times n)$-matrices of rank (at most) $r$.
\end{Def}

We first obtain the coherences of fixed matrices:
\begin{Prop}\label{Prop:rkmatinc}
Let $A\in\KK^{m\times n},$ let $H_m$ be the row span of $A$, and $H_n$ the column span of $A$.
Then, \(1-\coh \left(A\in \DetV (m\times n,r)\right) = (1-\coh(H_n))\cdot (1-\coh (H_m)) \) and,
if $A$ is symmetric/resp.~Hermitian, then \( 1- \coh \left(A\in \DetVSym (n,r)\right) = (1-\coh(H_n))^2\).
\end{Prop}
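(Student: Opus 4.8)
The plan is to compute $\coh(A\in\DetV(m\times n,r))$ directly from Definition~\ref{Def:cohX} as the coherence of the tangent flat $T_A$ at $A$, which I take to have rank exactly $r$ so that it is a smooth point. The first ingredient is the classical description of this tangent space: writing $A=BC$ with $B\in\KK^{m\times r}$, $C\in\KK^{r\times n}$ of full rank and differentiating $(B,C)\mapsto BC$, and matching dimensions ($mn-(m-r)(n-r)=r(m+n-r)$), one gets
$$T_A=\{\,M\in\KK^{m\times n}\ :\ (I-P_n)\,M\,(I-P_m)=0\,\},$$
where $P_n$ is the orthogonal projection of $\KK^m$ onto the column span $H_n$ and $P_m$ the orthogonal projection of $\KK^n$ onto the row span $H_m$. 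The second ingredient is an explicit formula for the orthogonal projection onto $T_A$ with respect to the Frobenius inner product: the map $\Pi\colon M\mapsto(I-P_n)\,M\,(I-P_m)$ is idempotent (since $I-P_n$ and $I-P_m$ are), self-adjoint (since $\langle(I-P_n)M(I-P_m),N\rangle=\langle M,(I-P_n)N(I-P_m)\rangle$ by cyclicity of the trace and self-adjointness of $I-P_n$, $I-P_m$), and has kernel exactly $T_A$; hence $\calP:=\id-\Pi$ is the orthogonal projection onto $T_A$.

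Now I would evaluate $\coh(T_A)=\max_{i,j}\|\calP(E_{ij})\|^2$ on the coordinate matrices $E_{ij}=e_ie_j^{\top}$. Because $\calP$ is an orthogonal projection with $\|E_{ij}\|_F=1$, and because $(I-P_n)E_{ij}(I-P_m)=\big((I-P_n)e_i\big)\big((I-P_m)e_j\big)^{\top}$ has rank one,
$$\|\calP(E_{ij})\|^2=1-\|(I-P_n)e_i\|^2\,\|(I-P_m)e_j\|^2=1-\big(1-\|P_ne_i\|^2\big)\big(1-\|P_me_j\|^2\big).$$
Taking the maximum over $i,j$ and using Definition~\ref{Def:coh} in the form $\max_i\|P_ne_i\|^2=\coh(H_n)$ and $\max_j\|P_me_j\|^2=\coh(H_m)$ yields $1-\coh(A\in\DetV(m\times n,r))=(1-\coh(H_n))(1-\coh(H_m))$. (In the boundary case $r=\min(m,n)$ one of the spans is everything, both sides vanish, and there is nothing to prove.)

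For the symmetric case one has $H_m=H_n=:H$ with projection $P$; the ambient space is the $\tfrac12n(n+1)$-dimensional space of symmetric matrices with the Frobenius metric, the tangent space is $T_A^{\mathrm{sym}}=\{S=S^{\top}:(I-P)S(I-P)=0\}$, and since $\Pi$ carries symmetric matrices to symmetric matrices it restricts there to a self-adjoint idempotent, so $\calP=\id-\Pi$ still computes the orthogonal projection onto $T_A^{\mathrm{sym}}$. The orthonormal coordinate directions are now $E_{ii}$ together with $g_{ij}=\tfrac1{\sqrt2}(E_{ij}+E_{ji})$ for $i<j$; the same rank-one computation gives $\|\calP(E_{ii})\|^2=1-(1-\|Pe_i\|^2)^2$ and $\|\calP(g_{ij})\|^2=1-(1-\|Pe_i\|^2)(1-\|Pe_j\|^2)-|e_i^{\top}(I-P)e_j|^2$. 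Both are bounded above by $1-(1-\coh(H))^2$, with equality attained at $E_{i_0i_0}$ for an index $i_0$ with $\|Pe_{i_0}\|^2=\coh(H)$; hence $1-\coh(A\in\DetVSym(n,r))=(1-\coh(H))^2$. The Hermitian case is identical after replacing transposes by conjugate transposes, the relevant quantities $\|Pe_i\|^2$ and $|e_i^{\dagger}(I-P)e_j|^2$ being real.

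The two norm evaluations are short once the structural facts are in place, so the step I expect to cost the most care is the symmetric case. The subtlety is that the correct orthonormal coordinate directions are $E_{ii}$ and $\tfrac1{\sqrt2}(E_{ij}+E_{ji})$ — reflecting the isometric embedding into $\tfrac12n(n+1)$-space — rather than the bare matrix units, and one must verify that the off-diagonal directions cannot beat the best diagonal one; this is exactly the content of the inequality $\|\calP(g_{ij})\|^2\le 1-(1-\|Pe_i\|^2)(1-\|Pe_j\|^2)$, which in turn follows because $|e_i^{\top}(I-P)e_j|^2\ge 0$ and $(1-\|Pe_i\|^2)(1-\|Pe_j\|^2)\ge(1-\coh(H))^2$. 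A secondary, minor point is to fix the standing hypothesis $\rk A=r$ (so that $A$ is smooth and the tangent-space description is literally correct) and to dispatch the degenerate full-rank cases separately.
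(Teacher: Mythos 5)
Your proof is correct for the non‑symmetric and real‑symmetric cases, and it is essentially the same route the paper takes: the paper's entire proof is a one‑line citation of ``the calculation leading to [CR09, Equation~4.9],'' which is precisely the tangent‑space description $T_A=\{M:(I-P_n)M(I-P_m)=0\}$, the identification of $\calP=\id-\Pi$ as the Frobenius‑orthogonal projection, and the rank‑one evaluation $\|\Pi(E_{ij})\|_F^2=\|(I-P_n)e_i\|^2\,\|(I-P_m)e_j\|^2$ that you carry out explicitly. Where you go beyond the paper is in the symmetric case: the paper merely asserts ``in both cases,'' whereas you correctly identify the Frobenius‑orthonormal coordinate directions $E_{ii}$ and $\tfrac1{\sqrt2}(E_{ij}+E_{ji})$ of the $\tfrac12 n(n+1)$‑dimensional ambient space and verify that the off‑diagonal directions cannot beat the best diagonal one because the cross term $|e_i^{\top}(I-P)e_j|^2$ enters with the favourable sign. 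This is a genuine gap in the paper that your argument fills.

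One caveat, on your last sentence: the Hermitian case is not ``identical.'' The real vector space of Hermitian matrices has dimension $n^2$, and a Frobenius‑orthonormal basis must also contain the directions $\tfrac{i}{\sqrt2}(E_{ij}-E_{ji})$ for $i<j$. A direct computation gives $\|\calP(\tfrac{i}{\sqrt2}(E_{ij}-E_{ji}))\|^2 = 1-\|u_i\|^2\|u_j\|^2+\mathrm{Re}\big((u_i^{\dagger}u_j)^2\big)$ with $u_k=(I-P)e_k$: the cross term now has the \emph{unfavourable} sign, and when $u_i^{\dagger}u_j$ is real and large (e.g.\ $n=2$, $r=1$, $P=\tfrac12\big(\begin{smallmatrix}1&1\\1&1\end{smallmatrix}\big)$, where $u_2=-u_1$) the value exceeds $1-(1-\coh(H))^2$. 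So your inequality does not go through for those directions, and the claimed Hermitian formula is in fact problematic. The paper is itself unclear here — its stated $\tfrac12 n(n+1)$‑dimensional embedding does not match the Hermitian real dimension — and all downstream applications use only the real‑symmetric case, where your proof is complete; but the Hermitian sentence should be dropped or qualified rather than asserted as ``identical.''
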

\begin{proof}
The calculation leading to \cite[Equation 4.9]{CR09} shows in both cases that $\coh(A)=
\coh(H_n)+\coh(H_m)-\coh(H_n)\coh(H_m)$, from which the statement follows.
\end{proof}

\begin{Prop}\label{Prop:cohdet}
\begin{align*}
\coh (\DetV (m\times n,r))&=\frac{r}{mn}\cdot (m+n-r)\\
\coh(\DetVSym (n,r))&= \frac{r}{n^2}\cdot (2n-r).
\end{align*}
In particular, $\DetV (m\times n,r)$ is maximally incoherent, whereas $\DetVSym (n,r)$ is not.
\end{Prop}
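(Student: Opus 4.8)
The plan is to combine Proposition \ref{Prop:rkmatinc}, which reduces the coherence of a rank-$r$ matrix to the coherences of its row and column spans, with the sharp bounds on the coherence of a $k$-flat from Proposition \ref{Prop:cohbnds}. The key observation is that $\coh(\DetV(m\times n,r))$ is an infimum over smooth points of the determinantal variety; the smooth locus consists precisely of the matrices of rank exactly $r$, and for such a matrix $A$ the row span $H_n$ is a $r$-flat in $\KK^n$ and the column span $H_m$ is a $r$-flat in $\KK^m$. So first I would show that the map sending $A$ to the pair $(H_n, H_m)$ is surjective onto (all $r$-dimensional subspaces of $\KK^n$) $\times$ (all $r$-dimensional subspaces of $\KK^m$) — given any such pair, pick a rank-$r$ matrix with those spans — and that it is continuous in a neighborhood of each rank-$r$ point, so the infimum over $A$ decouples into the product of the infima over $H_n$ and over $H_m$ independently.

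Next I would plug this into the identity from Proposition \ref{Prop:rkmatinc}: $1 - \coh(A) = (1-\coh(H_n))(1-\coh(H_m))$. Maximizing $1-\coh(A)$ — i.e.\ minimizing $\coh(A)$ — over $A$ therefore amounts to maximizing $1-\coh(H_n)$ and $1-\coh(H_m)$ separately. By Proposition \ref{Prop:cohbnds}, $\coh(H_n)$ ranges over values with infimum $r/n$ (attained, by the tight-frame construction cited there), so $\sup(1-\coh(H_n)) = 1 - r/n$, and likewise $\sup(1-\coh(H_m)) = 1 - r/m$. Hence $\sup_A (1-\coh(A)) = (1-r/n)(1-r/m)$, which rearranges to $\coh(\DetV(m\times n,r)) = 1 - (1-r/n)(1-r/m) = \frac{r(m+n-r)}{mn}$ after clearing denominators. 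Comparing with $\dim \DetV(m\times n,r) = r(m+n-r)$ (the standard dimension of the determinantal variety) and ambient dimension $mn$ shows this equals $\frac{1}{mn}\dim\DetV(m\times n,r)$, so the variety is maximally incoherent in the sense of Definition \ref{Def:maxinc}.

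For the symmetric case the argument is parallel but the row and column span coincide: a smooth point of $\DetVSym(n,r)$ is a symmetric/Hermitian matrix of rank exactly $r$ whose column span $H_n$ is an arbitrary $r$-flat, so Proposition \ref{Prop:rkmatinc} gives $1-\coh(A) = (1-\coh(H_n))^2$, and minimizing over $A$ gives $\coh(\DetVSym(n,r)) = 1 - (1-r/n)^2 = \frac{r(2n-r)}{n^2}$. Here the ambient dimension is $\tfrac12 n(n+1)$ while $\dim\DetVSym(n,r) = rn - \tfrac12 r(r-1) = \tfrac12 r(2n-r+1)$; since $\frac{r(2n-r)}{n^2} \neq \frac{r(2n-r+1)}{n(n+1)}$ in general, the symmetric determinantal variety is not maximally incoherent, and one only needs to exhibit a single pair $(n,r)$ — e.g.\ $r=1$ — where the two quantities differ to conclude. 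The main obstacle I anticipate is the first step: rigorously justifying that the infimum over rank-$r$ matrices factors as a product of independent infima over the two Grassmannians — one must check that the row span and column span of a rank-$r$ matrix can be prescribed independently (a short linear-algebra argument), and that the tight-frame $r$-flats realizing coherence $r/n$ and $r/m$ can be simultaneously realized as the spans of a single matrix, which again follows once the independence of the two spans is established.
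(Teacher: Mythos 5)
Your proposal is correct and rests on the same two ingredients as the paper's proof: Proposition~\ref{Prop:rkmatinc}, which reduces $\coh(A)$ to the coherences of the row and column spans, and Proposition~\ref{Prop:cohbnds}, which says the coherence of an $r$-flat can be driven down to $r/n$ (via a tight frame). The logic you use to close the argument, however, differs from the paper's and is slightly more laborious. You flag as your ``main obstacle'' the need to show that the infimum over $\DetV(m\times n,r)$ decouples into a product of two independent infima over Grassmannians. The paper sidesteps this entirely: it exhibits a single witness matrix $A$ with $\coh(A) = \tfrac{r(m+n-r)}{mn}$ (by choosing tight-frame row and column spans), and then observes that this value already equals the universal lower bound $\dim(\varX)/n$ from Proposition~\ref{Prop:cohbndsvar}, so the infimum can be no smaller --- no decoupling lemma needed. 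Your decoupling argument is valid (it uses only that the spans can be prescribed independently and that the expression in Proposition~\ref{Prop:rkmatinc} is monotone in each coherence), but the paper's shortcut is worth knowing: when the value you are trying to realize coincides with a lower bound you already have, you only need a construction, not an optimality argument.

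For the symmetric case the two proofs diverge again. You compute $\coh(\DetVSym(n,r))$ directly by minimizing $1-(1-\coh(H))^2$ over $r$-flats $H$, using that any $r$-flat arises as the common row/column span of some symmetric rank-$r$ matrix. The paper instead proves the equality $\coh(\DetV(n\times n,r)) = \coh(\DetVSym(n,r))$ by two inequalities and then reuses the non-symmetric formula. Since $\DetVSym$ is \emph{not} maximally incoherent, the Proposition~\ref{Prop:cohbndsvar} shortcut is unavailable here, so the paper has to do genuine work in both directions (given a symmetric $M$, exhibit a non-symmetric matrix with the same coherence; given a non-symmetric $B$, exhibit a symmetric $M$ with $\coh(M)\le\coh(B)$ by symmetrising onto the less-coherent of the two spans). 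Your direct minimisation is arguably cleaner here, provided you note explicitly that for every $r$-flat $H$ there is a symmetric rank-$r$ matrix with column span $H$ (e.g.\ the orthogonal projection onto $H$), which you do gesture at. Your closing check of non-maximal incoherence by comparing $\tfrac{r(2n-r)}{n^2}$ with $\dim\DetVSym(n,r) / \binom{n+1}{2} = \tfrac{r(2n-r+1)}{n(n+1)}$ is correct. In short: same toolbox, slightly different routing; the paper's version is shorter in the rectangular case, yours is more direct in the symmetric case.
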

\begin{proof}
We recall the fact that any pair of linear $r$-flats $H_n$ and $H_m$ in $m$-resp.~$n$-space, there exists an $A\in \DetV (m\times n,r)$ such that the row span of $A$ is exactly $H_m$, and the column span of $A$ is exactly $H_n$. Similarly, there is $B\in \DetVSym (n,r)$ such that row and column span of $B$ are equal to $H_n$.
By Proposition~\ref{Prop:cohbnds}, there exist $r$-flats $H_n$ and $H_m$ with $\coh (H_n)=k/n$ and $\coh (H_m) = k/m$. Therefore, by Proposition~\ref{Prop:rkmatinc}, there is $A\in\DetV (m\times n,r)$ with $\coh (A)=\frac{r}{mn}\cdot (m+n-r)$, so $\coh (\DetV (m\times n,r))=\coh(A)$ follows from the lower bound in Proposition~\ref{Prop:cohbndsvar}.
For the equality $\coh(\DetVSym (n,r))= \frac{r}{n^2}\cdot (2n-r)$, it suffices to show $\coh (\DetV (n\times n,r)) = \coh(\DetVSym (n,r)).$ The inequality $\coh (\DetV (n\times n,r)) \le \coh(\DetVSym (n,r))$ follows from Proposition~\ref{Prop:rkmatinc} by considering $\DetV (n\times n,r)\subseteq \DetVSym (n,r)$. For the converse, let $B\in \DetV (n\times n,r).$ It suffices to show that there is $M\in \DetVSym (n,r)$ with $\coh(M)\le \coh(B)$. Let $H_1,H_2$ be row and column span of $B$, such that $\coh(H_1)\le\coh(H_2)$. Choosing an $M$ with column (and thus also row) span $H_1$ yields, by Proposition~\ref{Prop:rkmatinc}, an $M$ with $\coh(M)\le \coh(A)$.
\end{proof}

From our main Theorem~\ref{Thm:cohp}, we obtain the following corollary for low-rank matrices:
\lowrankthm*
\begin{proof}
Combine Theorem~\ref{Thm:cohp} with the explicit formula for the coherence in Proposition~\ref{Prop:cohdet}.
\end{proof}

\subsection{Distance Matrix Completion: the Cayley-Menger Variety}\label{Sec:examples.rigidity}

In this section, we will bound the coherence of the Cayley-Menger variety, i.e., the set of Euclidean distance matrices, by relating it to symmetric low-rank matrices. We first introduce notation for the set of signals:

\begin{Def}
Assume $r\le m\le n.$ We will denote by $\CayMen (n,r)$ the set of $(n\times n)$ real Euclidean distance matrices of points in $r$-space, i.e., %
\begin{align*}
\CayMen (n,r) = \{&D\in\KK^{n\times n}\;;\;D_{ij} = (x_i-x_j)^\top (x_i-x_j)\\
&\mbox{for some}\;x_1,\dots, x_n\in\KK^r\}.%
\end{align*}
Since the the elements of $\CayMen (n,r)$ are symmetric, and have zero diagonals, we will consider $\CayMen (n,r)$ as canonically embedded in $\binom{n}{2}$-space.

$\CayMen (n,r)$ is called the {\it Cayley-Menger variety} of $n$ points in $r$-space.
\end{Def}

We will now continue with introducing maps related to the above sets:
\begin{Def}
We define canonical surjections
\begin{align*}
\CayMenSurj:& \left(\KK^{r}\right)^n\rightarrow \CayMen (n,r);\\
&(x_1,\dots, x_n)\mapsto D\;\mbox{s.t.}\;D_{ij} = (x_i-x_j)^\top (x_i-x_j),\\
\DetVSymSurj:& \left(\KK^{r}\right)^n\rightarrow \DetVSym (n,r);\\
&(x_1,\dots, x_n)\mapsto A\;\mbox{s.t.}\;A_{ij} = x_i^\top x_j .
\end{align*}
Note that $\CayMenSurj, %
\DetVSymSurj$ depend on $r$ and $n$, but are not explicitly written as parameters in order to keep notation simple. Which map is referred to will be clear from the format of the argument.
\end{Def}

We now define a ``normalized version'' of $\DetVSym (n,r)$:
\begin{Def}
Denote by $\mathbb{S}^r =\{x\in\KK^{r+1}\;;\; x^\top x=1\}$. Then, define
\(
\DetVSymN (n,r) := \DetVSymSurj\left(\left(\mathbb{S}^{r}\right)^n\right).
\)
Since $\DetVSymN (n,r)$ contains only symmetric matrices with diagonal entries one, we will consider it as a subset of $\binom{n}{2}$-space.
\end{Def}

\begin{Rem}
The maps $\CayMenSurj, \DetVSymSurj$ are algebraic maps, and the sets\\ $\CayMen (n,r), %
\DetV (m\times n,r), \DetVSym (n,r), \DetVSymN (n,r)$ are irreducible algebraic varieties\footnote{irreducibility for $\CayMen (n,r), %
\DetVSym (n,r), \DetVSymN (n,r)$ follows from irreducibility of the respective ranges of the complex closure of $\CayMenSurj, \DetVSymSurj$ and surjectivity, irreducibility of $\DetV (m\times n,r)$ can be shown in a similar way; note that the real maps are in general not surjective}.
\end{Rem}

\begin{Lem}\label{Lem:DetSurjRC}
For arbitrary $n,r$, one has $\coh \left(\DetVSym (n,r)\right)=\coh \left(\DetVSymSurj\left(\KK^{r}\right)^n\right).$
\end{Lem}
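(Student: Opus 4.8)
The plan is to write $\varX' := \DetVSymSurj\bigl((\KK^{r})^n\bigr)$ for the image of the Gram parametrization and compare it with the full determinantal variety $\DetVSym(n,r)$ along their respective rank-$r$ loci, using Proposition~\ref{Prop:rkmatinc} to pin down the coherence of a symmetric/Hermitian rank-$r$ matrix purely in terms of its column span. By construction $\varX'\subseteq\DetVSym(n,r)$ — concretely $\varX'$ consists of the Gram matrices $X^\dagger X$ (resp.\ $X^\top X$) with $X\in\KK^{r\times n}$, that is, the positive-semidefinite members of $\DetVSym(n,r)$ — and both sets are regarded inside $\tfrac{1}{2}n(n+1)$-space with the same coordinates, so their coherences are comparable. (Over $\CC$ in the complex-symmetric convention $\DetVSymSurj$ is onto $\DetVSym(n,r)$, as used in the preceding remark, and the claim is immediate; I will not rely on this.)

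For $\coh(\varX')\ge\coh(\DetVSym(n,r))$ I would argue that $\varX'$ and $\DetVSym(n,r)$ have the same germ at every matrix $A$ of rank exactly $r$: such an $A$ is definite on its $r$-dimensional column span, and definiteness of a rank-$r$ form on its support is an open condition, so a neighbourhood of $A$ inside $\DetVSym(n,r)$ already lies in $\varX'$; hence $A$ is a smooth point of both with a common tangent flat, giving $\coh(A\in\varX')=\coh(A\in\DetVSym(n,r))$. Since the rank-$r$ matrices are dense in $\varX'$ and form its smooth locus, Definition~\ref{Def:cohX} exhibits $\coh(\varX')$ as an infimum of $\coh(x)$ over a subset of $\Sm(\DetVSym(n,r))$, which is $\ge\coh(\DetVSym(n,r))$. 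For the reverse inequality I would take, for arbitrary $\varepsilon>0$, a smooth point $B\in\DetVSym(n,r)$ — necessarily of rank $r$ — with $\coh(B)\le\coh(\DetVSym(n,r))+\varepsilon$, let $H\subseteq\KK^n$ be its common row and column span, and replace $B$ by $A:=\sum_{i=1}^r v_iv_i^\dagger$ for an orthonormal basis $v_1,\dots,v_r$ of $H$. Then $A$ is manifestly a Gram matrix, so $A\in\varX'$, and it is positive semidefinite of rank exactly $r$ with column span $H$. Proposition~\ref{Prop:rkmatinc} gives $1-\coh(C)=(1-\coh(H))^2$ for every rank-$r$ matrix $C\in\DetVSym(n,r)$ with span $H$, so $\coh(A\in\DetVSym(n,r))=\coh(B)$; combined with the germ identification this yields $\coh(A\in\varX')=\coh(B)\le\coh(\DetVSym(n,r))+\varepsilon$, whence $\coh(\varX')\le\coh(\DetVSym(n,r))+\varepsilon$. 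Letting $\varepsilon\to0$ finishes the proof.

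The one genuinely delicate step is the germ comparison in the second paragraph: one must know that cutting $\DetVSym(n,r)$ down to the image of the Gram parametrization — equivalently, to its positive-semidefinite part — does not change the local differential geometry at a rank-$r$ point, so that the tangent flat, and hence (by Definition~\ref{Def:cohX}) the pointwise coherence, is unchanged; openness of definiteness among rank-$r$ symmetric/Hermitian matrices is exactly what supplies this. The remaining ingredient is purely combinatorial and needs no computation: every $r$-flat through the origin is the column span of some positive-semidefinite rank-$r$ matrix (the orthogonal projector onto it), so the column spans realized inside $\varX'$ are already all of them, and Proposition~\ref{Prop:rkmatinc} then transfers the coherence values between the two varieties for free. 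The same argument applies verbatim in the real and Hermitian-complex cases, with $\top$ replaced by $\dagger$ as appropriate.
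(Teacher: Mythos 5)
Your proof is correct and takes essentially the same route as the paper's: both identify $\varX'=\DetVSymSurj\bigl((\KK^r)^n\bigr)$ as the PSD locus inside $\DetVSym(n,r)$, observe that the tangent flats of the two sets coincide at common smooth points (you via openness of positive-definiteness, the paper via a terse ``same dimension'' count), and then use Proposition~\ref{Prop:rkmatinc} to realize every column span --- hence every achievable pointwise coherence --- by a PSD Gram matrix $AA^\dagger$ with the prescribed span. Your version is organized a bit more carefully as two explicit inequalities, but the key ingredients and their roles are the same.
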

\begin{proof}
If $\KK=\mathbb{C}$, then $\DetVSymSurj$ is surjective, so the statement follows. If $\KK=\mathbb{R}$, note that the coherence of a general matrix does not depend on the variety it is considered in, since $\dim \DetVSym (n,r) =  \dim \DetVSymSurj\left(\KK^{r}\right)^n$. Take $M\in\DetVSym (n,r)$. Then, take any matrix $A\in\mathbb{R}^{n\times r}$ whose rows are a basis for the row span of $M$. Then, $A A^\top\in\DetVSymSurj \left(\mathbb{R}^{r}\right),$ and by Proposition~\ref{Prop:rkmatinc}, $\coh(M) = \coh(A A^\top).$ The statement follows from this.
\end{proof}
The dimensions of the above varieties are classically known:
\begin{Prop}
One has $\dim \CayMen (n,r)  %
= \dim \DetVSymN (n,r+1)= r\cdot n - \binom{r+1}{2},$
and the dimensions are the same for the complex closures.
\end{Prop}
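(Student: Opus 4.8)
The plan is to compute $\dim \CayMen(n,r)$ and $\dim \DetVSymN(n,r+1)$ separately, in each case by exhibiting a dominant algebraic parametrization from a space whose dimension is easy to count, and then correcting for the generic fibre dimension. Recall that for a dominant morphism $f\colon V \to W$ of irreducible varieties, $\dim W = \dim V - \dim f^{-1}(f(v))$ for generic $v$.

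\textbf{Step 1: the Cayley--Menger variety.} Here $\CayMenSurj\colon (\KK^r)^n \to \CayMen(n,r)$ is surjective by definition, and $(\KK^r)^n$ has dimension $rn$. The generic fibre of $\CayMenSurj$ consists of all point configurations $(x_1,\dots,x_n)$ giving the same pairwise squared distances; since a generic configuration of $n \ge r+1$ points affinely spans $\KK^r$, two such configurations have the same distance matrix precisely when they differ by an isometry of $\KK^r$, i.e.\ an element of the Euclidean group $\KK^r \rtimes O(r)$ (working over $\CC$, the orthogonal group $O(r,\CC)$, which is what matters for the dimension count). Hence the fibre dimension is $r + \binom{r}{2} = \binom{r+1}{2}$, giving $\dim \CayMen(n,r) = rn - \binom{r+1}{2}$.

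\textbf{Step 2: the normalized symmetric determinantal variety.} For $\DetVSymN(n,r+1)$ the parametrization is $\DetVSymSurj$ restricted to $(\mathbb{S}^{r})^n$, sending $(x_1,\dots,x_n)$ with each $x_i \in \mathbb{S}^r \subseteq \KK^{r+1}$ to the Gram matrix $(x_i^\top x_j)$. The source $(\mathbb{S}^r)^n$ has dimension $rn$ (each sphere $\mathbb{S}^r$ in $\KK^{r+1}$ is $r$-dimensional). The map is dominant onto $\DetVSymN(n,r+1)$ by construction. Its generic fibre is the set of configurations on $\mathbb{S}^r$ with a fixed Gram matrix; for a generic configuration of $n \ge r+1$ unit vectors spanning $\KK^{r+1}$, two have the same Gram matrix iff they differ by an element of $O(r+1)$, whose dimension is $\binom{r+1}{2}$. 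Hence $\dim \DetVSymN(n,r+1) = rn - \binom{r+1}{2}$, matching Step 1.

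\textbf{Step 3: complex closures.} The last claim follows because all the maps involved ($\CayMenSurj$, $\DetVSymSurj$, and the inclusions of $\mathbb{S}^r$) are defined by polynomials with coefficients in $\KK$, so the complex closure of the real image is exactly the image (or Zariski closure of the image) of the complex variety under the same map; dimension is preserved under Zariski closure and the fibre-dimension count is insensitive to the field, since $O(r,\CC)$ has the same dimension $\binom{r}{2}$ as its real form. The main obstacle is the genericity argument in Steps 1--2: one must verify that for a generic configuration the only ambiguity is the isometry group — i.e.\ that the fibre has no extra components and no jump in dimension — which amounts to the standard fact that a generic configuration of $n \ge r+1$ points is ``rigid up to isometry'' and affinely (resp.\ linearly) spans the ambient space. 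This is classical, but stating it cleanly over $\CC$, where $O(r,\CC)$ is noncompact, requires a little care.
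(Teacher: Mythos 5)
The paper does not actually prove this proposition; it simply labels the dimension formula as ``classically known'' and moves on. Your write-up therefore supplies a proof that the paper takes for granted, and the argument you give — exhibit a dominant parametrization by $(\KK^r)^n$, respectively $(\mathbb{S}^r)^n$, and subtract the generic fibre dimension, which is the dimension of the relevant isometry group acting with finite stabilizer on a generic configuration — is the standard and correct approach. The Lie-group dimension count $r + \binom{r}{2} = \binom{r+1}{2} = \dim O(r+1)$ is right, and your identification of the two fibres with orbits of $\KK^r \rtimes O(r)$ and $O(r+1)$ is the content of the classical rigidity/Gram-matrix facts you cite.

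One small remark on the range of validity: the paper's preceding definition assumes only $r \le n$, whereas you phrase the genericity argument under $n \ge r+1$ (so that a generic configuration affinely, resp.\ linearly, spans the ambient space). The formula in fact still holds at $n = r$, because there the stabilizer of a generic $(r-1)$-dimensionally-spanning configuration is $O(1)$, which is zero-dimensional, so the fibre dimension is unchanged; it first fails at $n = r-1$, where the stabilizer $O(2)$ contributes a positive-dimensional correction. You may wish to state the hypothesis as $n\ge r$ and treat $n=r$ as a degenerate-but-harmless boundary case. Your caveat about the need for care over $\CC$ (noncompact orthogonal group, Zariski vs.\ Euclidean closure of the real image) is well taken and correctly notes that none of it affects the dimension count.
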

Central in the proof will be the following map:
\begin{Def}
For $h\in\KK,$ we will denote by
$$\SphPr: \KK^{r} \rightarrow \mathbb{S}^{r}\;;\; x \rightarrow \frac{1}{\sqrt{x^\top x+h^2}}(x,h)$$
the map which considers a point $\KK^{r}$ as a point in the hyperplane $\{(x,h)\;;\;x\in\mathbb{R}^r\}\subseteq \mathbb{R}^{r+1}$ and projects it onto $\mathbb{S}^{r}$.
(if $\KK=\CC,$ we fix any branch of the square root)
\end{Def}

\begin{Prop}\label{Prop:CayDetcoh}
For any $n,r,$ it holds one has
$\coh (\CayMen (n,r)) \le \coh (\DetVSymN (n,r+1))$.%
\end{Prop}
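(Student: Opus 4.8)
The plan is to use the lift $\SphPr=\nu_h$ of $\KK^{r}$ onto $\mathbb{S}^{r}$ to realize $\CayMen(n,r)$ as a flat degeneration of affine rescalings of $\DetVSymN(n,r+1)$, and then to push the coherence estimate through this degeneration. The mechanism is that an affine map whose linear part is a nonzero scalar preserves coherence: it translates tangent flats and rescales their directions, and by Definition~\ref{Def:coh} the coherence of a flat is translation-invariant and unchanged by scaling, hence depends only on the direction.

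Concretely, for $h\neq 0$ write $\SphPr^{\times n}\colon (\KK^{r})^n\to(\mathbb{S}^{r})^n$ for the coordinatewise lift and set $\phi_h := \DetVSymSurj\circ\SphPr^{\times n}\colon (\KK^{r})^n\to\DetVSymN(n,r+1)$; since $\SphPr$ has Zariski-dense image in $\mathbb{S}^{r}$, the map $\phi_h$ dominates $\DetVSymN(n,r+1)$, so by Definition~\ref{Def:cohX} the coherence of $\DetVSymN(n,r+1)$ is read off at the generic point of $\operatorname{im}\phi_h$. Starting from $\SphPr(x)^\top\SphPr(y)=(x^\top y+h^2)\big/\sqrt{(x^\top x+h^2)(y^\top y+h^2)}$ one checks, coordinatewise,
$$2h^2\bigl(1-\SphPr(x_i)^\top\SphPr(x_j)\bigr)=(x_i-x_j)^\top(x_i-x_j)+O(h^{-2}),$$
locally uniformly in $(x_1,\dots,x_n)$ and with all derivatives. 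Let $\psi_h$ be the affine automorphism of $\binom{n}{2}$-space with $\psi_h(v)=2h^2(\mathbf{1}-v)$, where $\mathbf{1}$ is the all-ones vector; its linear part is the scalar $-2h^2\cdot\id$, whence $\coh\bigl(\psi_h(\DetVSymN(n,r+1))\bigr)=\coh(\DetVSymN(n,r+1))$ for all $h\neq 0$. The composite $\Psi_h:=\psi_h\circ\phi_h$ then converges, locally uniformly and with all derivatives, to the canonical parameterization $\CayMenSurj$ of $\CayMen(n,r)$ as $h\to\infty$. Because $\dim\CayMen(n,r)=\dim\DetVSymN(n,r+1)$, at a generic configuration $x$ the rank of $d\Psi_h|_{x}$ eventually equals $\operatorname{rk} d\CayMenSurj|_{x}$, so the tangent flats $\operatorname{im} d\Psi_h|_{x}$ of $\psi_h(\DetVSymN(n,r+1))$ converge in the Grassmannian to the tangent flat $\operatorname{im} d\CayMenSurj|_{x}$ of $\CayMen(n,r)$, and the associated orthogonal projectors converge as well.

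This amounts to identifying $\CayMen(n,r)$ with the tangent cone of $\DetVSymN(n,r+1)$ at the maximally degenerate point $\mathbf{1}$ (all lifted points coinciding), i.e.\ with the flat limit of the coherence-preserving rescalings $t^{-1}(\DetVSymN(n,r+1)-\mathbf{1})$. The inequality should then follow by evaluating along a sequence of configurations that witnesses $\coh(\psi_h(\DetVSymN(n,r+1)))$ as $h\to\infty$ and passing to a limit which is a generic — hence smooth — point of $\CayMen(n,r)$, at which the limiting tangent flat is exactly the tangent flat of $\CayMen(n,r)$. I expect this limiting step to be the main obstacle: one must ensure that near-optimal configurations do not run off to infinity after the normalization under which $\CayMen(n,r)$ is a cone, and that the minimal coherence is not increased in passing to the special fibre; the required semicontinuity of $\coh$ along a one-parameter degeneration, together with the stability of orthogonal projection under Grassmannian convergence, is the only non-routine ingredient, the estimate on $\SphPr$ above being elementary.
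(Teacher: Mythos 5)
Your proposal uses the same key ingredients as the paper's proof: the spherical lift $\SphPr$ of $\KK^{r}$ onto $\mathbb{S}^{r}$, the limit $h\to\infty$, and convergence of tangent flats in the Grassmannian. Your Taylor estimate $2h^2\bigl(1-\SphPr(x)^\top\SphPr(y)\bigr)=(x-y)^\top(x-y)+O(h^{-2})$ is exactly the computation the paper carries out, at the level of derivatives, in Lemma~\ref{Lem:tancaydet}, where one finds $\lim_{h\to\infty} h^2\,\partial A/\partial x_k=-\tfrac12\,\partial D/\partial x_k$ and concludes that the tangent flat of $\DetVSymSurj\circ\SphPr^{\times n}$ at a fixed configuration $(x_1,\dots,x_n)$ converges to that of $\CayMenSurj$. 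The paper then cites this convergence to assert the inequality between the coherences and closes with Lemma~\ref{Lem:DetSurjRC}, whereas you package the same estimate as a one-parameter degeneration $\psi_h\circ\phi_h\to\CayMenSurj$ where $\psi_h$ is an explicitly coherence-preserving affine rescaling. That reframing is a genuine conceptual clarification of what the $h^2$ normalization in Lemma~\ref{Lem:tancaydet} is doing; the observation that $\coh$ is invariant under affine maps with scalar linear part (via Definition~\ref{Def:coh}) is correct and is implicit but unstated in the paper.

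The honest flag you raise at the end is the crux: once one knows $T_{A(h)}\to T_D$ at each fixed configuration, passing to the infimum over the smooth locus is not immediate. Continuity of $\coh$ on the Grassmannian gives $\coh(T_D)=\lim_h\coh(T_{A(h)})$, but each $T_{A(h)}$ is a tangent flat at a \emph{smooth} point of $\DetVSymN$, so a priori $\coh(T_{A(h)})\ge\coh(\DetVSymN)$ and naive pointwise passage to the limit controls $\coh(T_D)$ only from below. Your alternative route --- picking near-optimal configurations $x^{(h)}$ for $\coh(\psi_h(\DetVSymN))=\coh(\DetVSymN)$ and passing to a limit in $\CayMen$ --- does target the inequality in the correct direction, but as you anticipate it needs the additional step that those configurations can be normalized (using that $\CayMen$ is a cone, so tangent flats and hence coherence depend only on direction) so as to converge, after a subsequence, to a \emph{smooth} point of $\CayMen$; note that for a fixed near-optimal $A_*\in\DetVSymN$ the preimages $x_i^{(h)}=\SphPr^{-1}(y_i)$ scale linearly in $h$, and after rescaling by $h^{-1}$ the corresponding $D^{(h)}/h^2$ is constant, so the real issue is identifying the limiting tangent flat with $T_{A_*}$, which Lemma~\ref{Lem:tancaydet} (fixed $x$, $h\to\infty$) does not directly give. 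This semicontinuity-of-the-infimum step is exactly where the paper's proof is also terse, so your diagnosis of the missing ingredient is on target; supplying it requires combining your normalization observation with a uniform (over a compact set of directions) version of the tangent-flat convergence, rather than the pointwise version stated in Lemma~\ref{Lem:tancaydet}.
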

\begin{proof}
Lemma~\ref{Lem:tancaydet} implies that
$\coh \left(\CayMenSurj\left(\left(\KK^{r}\right)^n\right)\right)\le \coh \left(\DetVSymSurj\left(\left(\KK^{r}\right)^n\right)\right),$
the claim then follows from $\CayMenSurj\left(\left(\KK^{r}\right)^n\right)\subseteq \CayMen (n,r)$ and Lemma~\ref{Lem:DetSurjRC}.
\end{proof}
We can bound the coherence of $\DetVSymN(n,r)$ as follows:
\begin{Prop}\label{Prop:DetVSymNCohbound}
There is a global constant $C$, such that $\coh(\DetVSymN (n,r)) \le C \frac{r}{n}$.
\end{Prop}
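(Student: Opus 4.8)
The plan is to reduce the bound on $\coh(\DetVSymN(n,r))$ to the existence of a single point on $\DetVSymN(n,r)$ whose tangent flat has coherence $O(r/n)$, and to build that point explicitly from a well-chosen configuration of points on the sphere $\mathbb{S}^{r}$. By Definition~\ref{Def:cohX}, $\coh(\DetVSymN(n,r))$ is an infimum over smooth points, so it suffices to exhibit one smooth point $M = \DetVSymNSurj(u_1,\dots,u_n)$ with $u_i \in \mathbb{S}^{r}$ at which the tangent flat has small coherence; in fact, by Proposition~\ref{Prop:rkmatinc} (applied with $H_n = H_m$ the common row/column span), $1-\coh(M\in\DetVSym(n,r)) = (1-\coh(H_n))^2$, so it is enough to produce a configuration whose span $H_n$ — an $(r{+}1)$-flat in $\KK^n$ once we record the Gram matrix entries — has $\coh(H_n) \le C' r/n$. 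Since the lower bound of Proposition~\ref{Prop:cohbndsvar} forces $\coh(\DetVSymN(n,r)) \ge \dim/n = (rn - \binom{r+1}{2})/n \approx r$, this strategy gives a matching order bound.

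The key step is the choice of configuration. I would take $u_1,\dots,u_n \in \mathbb{S}^{r} \subseteq \KK^{r+1}$ to be a (near-)uniform spread of unit vectors forming a \emph{tight frame}, i.e.\ $\sum_i u_i u_i^\top = \frac{n}{r+1}\,\id_{r+1}$ — such configurations exist for all $n \ge r+1$ by the finite tight frame results cited after Proposition~\ref{Prop:cohbnds} (\cite{CasLeo06}), and for $\KK=\CC$ one can complexify. The relevant ambient space, however, is not $\KK^{r+1}$ but $\binom{n}{2}$-space, where $M$ is recorded by its off-diagonal entries $M_{ij}=u_i^\top u_j$; the row span $H_n$ of $M$ inside $\KK^n$ is spanned by the Gram vectors $v_j = (u_1^\top u_j,\dots,u_n^\top u_j)$. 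One then needs $\|\calP_{H_n}(e_k)\|^2 \le C' r/n$ for each coordinate $e_k$, where $e_k$ picks out the $k$-th coordinate of these $n$-vectors. Writing $G = (u_i^\top u_j)_{ij}$ for the $n\times n$ Gram matrix and using that $\calP_{H_n} = G^{+}G$ is the projection onto $\operatorname{col}(G)$, the quantity $\|\calP_{H_n}(e_k)\|^2 = (G^{+})_{kk}$ after simplification — and for a tight frame of unit vectors this diagonal entry is controlled by the frame bound: $G = A A^\top$ with $A$ having rows $u_i$, and the tightness $A^\top A = \frac{n}{r+1}\id$ makes $G^{+} = \frac{r+1}{n}\,AA^\top/\ldots$ manageable, yielding $(G^{+})_{kk} = O(r/n)$ uniformly in $k$. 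I would also need to pass from $\CayMen$ to $\DetVSymN$ consistency, but that is exactly Proposition~\ref{Prop:CayDetcoh}, already available.

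The main obstacle is bookkeeping the change of ambient coordinate system: $M$ lives in $\binom{n}{2}$-space (off-diagonal, diagonal fixed to $1$), whereas the "natural" span $H_n$ arising from the factorization lives in $\KK^n$, and Proposition~\ref{Prop:rkmatinc}'s formula is stated for that $\KK^n$-span. I would need to verify carefully that dropping the (constant) diagonal does not affect the relevant coherence — Remark~\ref{Rem:cohamb} and the invariance of coherence under coordinate-system extension should handle this, since fixing some coordinates to constants is, up to translation, restriction to a coordinate subspace, and coherence is translation-invariant and behaves well under Lemma~\ref{Lem:cohboundflat}. The second, more technical, point is establishing the uniform bound $(G^{+})_{kk} = O(r/n)$ for a tight frame; if an exactly uniform tight frame with bounded per-coordinate Gram structure is awkward to produce in every dimension, one can instead take $u_i$ uniformly at random on $\mathbb{S}^{r}$ and argue via a concentration bound that with positive probability all $n$ of the quantities $\|\calP_{H_n}(e_k)\|^2$ are $O(r\log n / n)$ — which already suffices for the stated $O(r/n)$ after absorbing logs into the global constant only if one is careful, so the clean route really is the deterministic tight-frame construction. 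Either way, combining the configuration with Propositions~\ref{Prop:rkmatinc} and~\ref{Prop:CayDetcoh} and Lemma~\ref{Lem:DetSurjRC} closes the argument.
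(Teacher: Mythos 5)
Your route is genuinely different from the paper's and it works. The paper proves Proposition~\ref{Prop:DetVSymNCohbound} by invoking~\cite[Lemma~2.2]{CR09}, which guarantees the existence of a rank-$r$ matrix with prescribed singular values (all set to~$1$) and column/row spans of coherence $O(r/n)$, and then symmetrizes as in Proposition~\ref{Prop:cohdet}. You instead build the witness point explicitly: take $u_1,\dots,u_n$ on the sphere forming a tight frame, i.e.\ with $A^\top A = \frac{n}{r}\,\id_r$ where $A$ is the $n\times r$ matrix with rows $u_i$, and set $M = AA^\top$. This is cleaner in one respect in which the paper is actually rather terse: your $M$ has unit diagonal by construction (since $\|u_i\|=1$), so it manifestly lies in $\DetVSymN(n,r)$, whereas the paper's symmetrization step does not visibly enforce the unit-diagonal constraint. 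Your construction also gives the \emph{minimal} coherence $\coh(\operatorname{col} A) = r/n$ exactly, since for a tight frame the diagonal entries of the orthogonal projection $A(A^\top A)^{-1}A^\top$ are all $\frac{r}{n}\|u_i\|^2 = r/n$. Combining with Proposition~\ref{Prop:rkmatinc} gives $\coh(M\in\DetVSym(n,r)) = 1 - (1-r/n)^2 \le 2r/n$, and the passage from $\DetVSym$ to $\DetVSymN$ follows from $T_{M,\DetVSymN}\subseteq T_{M,\DetVSym}$ together with Remark~\ref{Rem:cohamb} and Lemma~\ref{Lem:cohboundflat}, exactly as you indicate.

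A few small corrections. The identity you write, $\|\calP_{H_n}(e_k)\|^2 = (G^+)_{kk}$, is not right; the correct statement is $\|\calP_{H_n}(e_k)\|^2 = (GG^+)_{kk}$, the $k$-th diagonal entry of the orthogonal projection onto $\operatorname{col}(G)$. For the tight frame one has $G^2 = \frac{n}{r}G$, hence $G^+ = \left(\frac{r}{n}\right)^2 G$ and $GG^+ = \frac{r}{n}G$, giving $(GG^+)_{kk} = r/n$, which is the quantity you actually want. Also, Proposition~\ref{Prop:CayDetcoh} and Lemma~\ref{Lem:DetSurjRC} are not needed for this proposition — they come into play only in Theorem~\ref{Thm:rigidity}, where $\DetVSymN$ is used as an upper bound for the Cayley–Menger variety; your citation of them here is a detour. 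Finally, your dimension-count sanity check conflates the ambient dimension with $n$: the ambient space for $\DetVSymN(n,r)$ has dimension $\binom{n}{2}$, and $\dim\DetVSymN(n,r)/\binom{n}{2}$ is indeed $\Theta(r/n)$, which is the matching lower bound from Proposition~\ref{Prop:cohbndsvar}. None of these affect the validity of your construction.
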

\begin{proof}
It follows from~\cite[Lemma 2.2]{CR09} that for any fixed set of singular values there exists a matrix $M\in \DetV (n\times n,r)$ with $\coh(M)\le Cn^{-1}r$ such that $M$ has these singular values. By taking the singular values of $M$ to be all one, and replacing $M$ with a symmetric matrix $M'$ having the same row or column span as $M$, as in the proof of Proposition~\ref{Prop:cohdet}, we see by Proposition~\ref{Prop:rkmatinc} that $\coh\left( M'\in\DetVSymN (n,r)\right)\le \coh(M).$
\end{proof}
Our stated bounds on the number of samples required for distance matrix reconstruction then follow
from the following lemma:
\begin{Lem}\label{Lem:tancaydet}
Let $x_1,\dots, x_n\in\mathbb{R}^{r}.$
Let $D = \CayMenSurj (x_1,\dots, x_n)$ and $A=\DetVSymSurj (\SphPr (x_1),\dots, \SphPr(x_n))$, let $T_D, T_A$ the respective tangent flats.  Then, for $h\rightarrow\infty$, we have convergence $T_A\rightarrow T_D,$ where we consider the tangent flats as points on the real Grassmann manifold of $\left(r\cdot n - \binom{r+1}{2}\right)$-flats in $\binom{n+1}{2}$-space.
\end{Lem}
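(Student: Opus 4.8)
The plan is to compute both tangent flats explicitly as images of the parameter-space tangent space under the differentials of the relevant maps, and then show that the differential of $\DetVSymSurj\circ(\SphPr,\dots,\SphPr)$ converges, after a suitable rescaling, to the differential of $\CayMenSurj$ as $h\to\infty$. Concretely, fix a generic configuration $(x_1,\dots,x_n)\in(\RR^r)^n$. Write $\CayMenSurj$ and $\DetVSymSurj$ as polynomial maps $(\RR^r)^n\to\RR^{\binom n2}$ and $(\RR^{r+1})^n\to\RR^{\binom n2}$ respectively (using that both $D$ and $A$ land in the $\binom n2$-dimensional coordinate subspace, $D$ because of zero diagonal and symmetry, $A$ because on $(\mathbb{S}^r)^n$ the diagonal entries are all $1$). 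The tangent flat $T_D$ is the affine translate of the image of $\diff\CayMenSurj$ at $(x_1,\dots,x_n)$; the tangent flat $T_A$ is the affine translate of the image of $\diff(\DetVSymSurj\circ(\SphPr,\dots,\SphPr))$ at $(x_1,\dots,x_n)$. Since both images have the same dimension $r n-\binom{r+1}{2}$ (by the preceding proposition on dimensions, for generic configurations), it suffices to show that the second family of linear subspaces converges to the first in the Grassmannian; equivalently, that one can choose spanning sets of tangent vectors that converge entrywise.

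The main computation is the chain rule for $\SphPr$. For a single point $x\in\RR^r$, $\SphPr(x)=\frac{1}{\sqrt{x^\top x+h^2}}(x,h)$, and a direct differentiation gives, for a tangent direction $v\in\RR^r$,
$$
\diff\SphPr_x(v)=\frac{1}{\sqrt{x^\top x+h^2}}\left(v-\frac{(x^\top v)}{x^\top x+h^2}x,\ -\frac{h(x^\top v)}{x^\top x+h^2}\right).
$$
As $h\to\infty$ this is $\tfrac1h(v,0)+O(h^{-2})$ in the first $r$ coordinates, and $O(h^{-2})$ in the last coordinate, uniformly on the (finite) set of points $x_i$. Plugging this into $\diff\DetVSymSurj$, whose entry $(i,j)$ sends the tuple of point-perturbations to $\dot x_i^\top p_j+p_i^\top\dot x_j$ where $p_k=\SphPr(x_k)$ and $\dot x_k=\diff\SphPr_{x_k}(v_k)$, one finds that the $(i,j)$ entry of the derivative of $A$ equals
$$
\frac{1}{\sqrt{\cdot}\sqrt{\cdot}}\Bigl(h^{-2}\bigl(v_i^\top x_j+x_i^\top v_j\bigr)+\text{(lower order)}\Bigr),
$$
up to the normalizing scalars $\sqrt{x_i^\top x_i+h^2}$, each of which is $h\bigl(1+O(h^{-2})\bigr)$. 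So after multiplying the tangent vector by $h^2\cdot\prod(\text{norms})/h^{\,?}$ — more cleanly, after tracking the overall scalar $h^2(1+O(h^{-2}))$ — the derivative of the $A$-construction converges entrywise to the derivative of the map $(v_1,\dots,v_n)\mapsto\bigl(v_i^\top x_j+x_i^\top v_j\bigr)_{ij}$. One then checks this limiting map has the same image (as a linear subspace) as $\diff\CayMenSurj$: indeed $D_{ij}=(x_i-x_j)^\top(x_i-x_j)$ differentiates to $2(x_i-x_j)^\top(v_i-v_j)=2(v_i^\top x_i+v_j^\top x_j-v_i^\top x_j-v_j^\top x_i)$, which differs from the Gram-type expression above only by the "diagonal'' terms $v_i^\top x_i$, and those are absorbed because on $\CayMen$ we work modulo the span of the diagonal coordinates — equivalently, translating all $x_k$ by a common infinitesimal vector is in the kernel of $\diff\CayMenSurj$, so the two image subspaces coincide.

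The hard part is not any single derivative but organizing the limit cleanly: making sure the rescaling by powers of $h$ is uniform across all $\binom n2$ coordinates and all tangent directions simultaneously, so that convergence of each coordinate entry upgrades to convergence of the spanning frame and hence convergence in the Grassmannian. The cleanest way I would carry this out is: (i) pick a fixed basis $\{v_k^{(s)}\}$ of the parameter tangent space $(\RR^r)^n$ modulo the infinitesimal-translation subspace (dimension $rn-r$), intersected appropriately to get $rn-\binom{r+1}2$ independent directions after also quotienting by infinitesimal rotations (which also lie in $\ker\diff\CayMenSurj$); (ii) show the corresponding images under $\diff(\DetVSymSurj\circ\SphPr^{\,n})$, each multiplied by the common scalar $h^{2}$, converge to a basis of $T_D$ (translated to the origin); (iii) invoke continuity of the span map on linearly independent tuples to conclude $T_A\to T_D$ in $\Gr$. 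A subtle point to verify along the way is that for large $h$ the $A$-tangent space genuinely has the full dimension $rn-\binom{r+1}2$ (so that the limit is not degenerate from below), which follows because the limiting frame is linearly independent, hence the nearby frames are too.
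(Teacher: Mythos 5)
Your overall plan---realize both tangent flats as images of differentials and show that the differential of the $A$-construction, rescaled by $h^2$, converges to that of $\CayMenSurj$---is the paper's approach. But the asymptotic expansion drops a term that is \emph{not} lower order, and the patch you then apply is unsound.

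Concretely, you record that $\diff\SphPr_{x}(v)$ is $\tfrac1h(v,0)+O(h^{-2})$ in the first $r$ coordinates and $O(h^{-2})$ in the last coordinate, but then when forming $\dot x_i^\top p_j + p_i^\top \dot x_j$ you keep only the first-$r$-coordinate contribution. The last coordinate of $p_j=\SphPr(x_j)$ is $h/\sqrt{x_j^\top x_j + h^2}=1+O(h^{-2})$, not $O(h^{-1})$, so the product of last coordinates contributes $-x_i^\top v_i/h^2+O(h^{-4})$, the \emph{same} order $h^{-2}$ as the terms you kept. The correct limit is
$$
\lim_{h\to\infty} h^2\bigl(\dot x_i^\top p_j + p_i^\top\dot x_j\bigr)
= v_i^\top x_j + x_i^\top v_j - x_i^\top v_i - x_j^\top v_j
= -\tfrac12\cdot 2(x_i-x_j)^\top(v_i-v_j),
$$
which is exactly $-\tfrac12\,\diff D_{ij}(v)$. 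With the correct computation there is nothing left to reconcile and the Grassmannian convergence is immediate, which is precisely how the paper proceeds (it just differentiates $A_{ij}$ directly rather than via the chain rule on $\SphPr$).

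The reconciliation argument you supply for the incorrect limit $L(v)_{ij}=v_i^\top x_j+x_i^\top v_j$ does not work: restricted to off-diagonal entries, $\ker L$ is generically only the infinitesimal rotations of $\RR^r$, of dimension $\binom{r}{2}$, whereas $\ker\diff\CayMenSurj$ also contains the translations, giving dimension $\binom{r+1}{2}$. So the two images have different dimensions for generic configurations and cannot coincide; the ``diagonal terms $x_i^\top v_i$'' are $i$-dependent and are not absorbed into the translation kernel. It is precisely the last-coordinate term you dropped that enlarges the kernel from $\binom{r}{2}$ to $\binom{r+1}{2}$ and makes the limit land on the Cayley--Menger tangent space.
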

\begin{proof}
Note that
\begin{align*}
D_{ij} &= x_i^\top x_i - 2x_i^\top x_j + x_j^\top x_j,\\
A_{ij} &= \frac{x_i^\top x_j + h^2}{\sqrt{x_i^\top x_i + h^2}\sqrt{x_j^\top x_j + h^2}},
\end{align*}
An explicit calculation shows:
\begin{align*}
\left(\frac{\partial D}{\partial x_k}\right)_{ij} & = 2(\delta_{ki}+\delta_{kj})(x_i-x_j)\\
\left(\frac{\partial A}{\partial x_k}\right)_{ij} & = %
-(\delta_{ki}+\delta_{kj})
\frac{x_i(x_i^\top x_j + h^2)\sqrt{\frac{x_j^\top x_j +h^2}{x_i^\top x_i +h^2}} - x_j\sqrt{x_i^\top x_i + h^2}\sqrt{x_j^\top x_j + h^2}}{\left(x_i^\top x_i + h^2\right)\left(x_j^\top x_j + h^2\right)},
\end{align*}
where $\delta_{ij}$ is the usual Kronecker delta. Thus,
\begin{align*}
\lim_{h\rightarrow \infty} h^2 \left(\frac{\partial A}{\partial x_k}\right)_{ij} & = -\frac{1}{2} \left(\frac{\partial D}{\partial x_k}\right)_{ij}
\end{align*}
which implies that both $T_A$ %
converges to $T_D$ in the Grassmann manifold when taking the limit $h\rightarrow \infty$; the statement directly follows.
\end{proof}

\rigiditythm*

\begin{proof}
This follows from Theorem~\ref{Thm:cohp} and the coherence bounds from
Propositions~\ref{Prop:CayDetcoh} and~\ref{Prop:DetVSymNCohbound}.
\end{proof}
\subsection{Kernels}\label{Sec:examples.kernels}
Our framework can also be applied to analyze kernel functions via their coherence; namely, coherence can be interpreted as the average contribution one entry of the kernel matrix makes to characterize the whole of the data. While the set of kernel matrices is in general not algebraic anymore, it is analytic, and can be related to the examples above, yielding the following result:
\begin{Thm}\label{Thm:kernels}
Let $k:\RR^d\times \RR^d\rightarrow \RR$ be a polynomial kernel or an RBF kernel, let $K$ be an $(n\times n)$ symmetric kernel matrix in $k$.
Then, there is a global constant $C$, such that if each entry of $K$ is observed independently with probability
$$\srate \ge C\cdot \lambda \cdot d/n \cdot \log n, \quad\mbox{with}\;\lambda\ge 1,$$
then $K$ is determined by the observations with probability at least $1-3n^{-\lambda}$.
\end{Thm}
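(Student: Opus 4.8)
The plan is to reduce the kernel case to the two algebraic cases already handled — low-rank symmetric matrix completion (Theorem~\ref{Thm:generic-low-rank}, via Proposition~\ref{Prop:cohdet}) and distance matrix completion (Theorem~\ref{Thm:rigidity}, via Propositions~\ref{Prop:CayDetcoh} and~\ref{Prop:DetVSymNCohbound}) — by identifying the manifold of $(n\times n)$ kernel matrices arising from $k$ with (a subvariety of) one of the determinantal or Cayley--Menger-type varieties, up to an analytic change of coordinates. Since Theorem~\ref{Thm:cohp} as stated needs an algebraic variety, the first task is to observe (as in the Remark following the main theorem) that the analytic manifold of kernel matrices can be made algebraic by such a transform, so that the coherence bound $\coh(\varX)\le C\,d/n$ is all that remains to be established; then Theorem~\ref{Thm:cohp} applies verbatim and yields the claimed $\srate\ge C\lambda (d/n)\log n$ with failure probability $3n^{-\lambda}$.

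The key steps, in order, are: (1) \emph{Polynomial kernels.} For $k(u,v)=(u^\top v + c)^p$ (or any fixed polynomial in $u^\top v$ and $\|u\|^2,\|v\|^2$), the kernel matrix $K$ is a polynomial function of the Gram matrix $G=X^\top X$ where $X=(x_1,\dots,x_n)\in\RR^{d\times n}$; equivalently $K$ lies in the image of the Veronese-type map applied to a rank-$\le d$ symmetric matrix. Thus the signal manifold is the image of $\DetVSym(n,d)$ (or $\DetVSymN(n,d)$ after the normalization $\SphPr$ built into the construction of $\nu_h$) under an algebraic map that acts entrywise and preserves the coordinate structure. By Lemma~\ref{Lem:cohboundflat}-style reasoning — tangent spaces map into tangent spaces under this entrywise algebraic map, and the Jacobian is diagonal in the coordinate basis — the coherence can only be controlled by that of $\DetVSymN(n,d+O(1))$, which Proposition~\ref{Prop:DetVSymNCohbound} bounds by $C\,d/n$. (2) \emph{RBF kernels.} For $k(u,v)=\exp(-\gamma\|u-v\|^2)$, write $K_{ij}=\exp(-\gamma D_{ij})$ where $D=\CayMen$-matrix of the $x_i$; the entrywise map $t\mapsto e^{-\gamma t}$ is an analytic diffeomorphism onto its image, has diagonal Jacobian in the coordinate basis, and hence carries tangent flats of $\CayMen(n,d)$ to tangent flats of the manifold of RBF kernel matrices with the \emph{same} coherence (this is exactly the kind of ``canonical transform'' the Remark alludes to — a coordinatewise analytic reparameterization). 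So $\coh(\text{RBF kernels})\le\coh(\CayMen(n,d))\le\coh(\DetVSymN(n,d+1))\le C\,d/n$ by Propositions~\ref{Prop:CayDetcoh} and~\ref{Prop:DetVSymNCohbound}. (3) In either case, feed $\coh(\varX)\le C\,d/n$ into Theorem~\ref{Thm:cohp} (applied to the algebraic model in case~(1), or to the algebraic variety $\CayMen(n,d)$ in case~(2) with the conclusion transported back through the entrywise diffeomorphism, which preserves fibers of the coordinate projection).

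The main obstacle is step~(3) together with the measure-theoretic bookkeeping: Theorem~\ref{Thm:cohp} is literally a statement about coordinate projections of an algebraic variety, and for the RBF case the manifold of kernel matrices is genuinely only analytic, so one must argue that the entrywise analytic map $t\mapsto e^{-\gamma t}$ — applied identically to each of the $\binom{n}{2}$ coordinates — intertwines the coordinate projection $\Omega$ on $\CayMen(n,d)$ with the coordinate projection on the kernel manifold, and that it maps generic points to generic points, so that finiteness of $\Omega^{-1}(\Omega(D))$ for distance matrices transfers to finiteness of the corresponding fiber for kernel matrices. This is routine (diagonal Jacobian, local invertibility, Hausdorff-continuous densities push forward to Hausdorff-continuous densities), but it is where the ``made algebraic by a canonical transform'' remark must be made precise. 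A secondary point worth checking is that for polynomial kernels of degree $p$ one might need the ambient coordinate count to be $\binom{n+1}{2}$ rather than $\binom{n}{2}$ if the diagonal entries are not constant, and that the relevant determinantal variety is $\DetVSym(n,d')$ with $d'$ a fixed function of $d$ and $p$ only — which keeps the bound $C\,d/n$ with $C$ depending on the (fixed) kernel but not on $n$, exactly as the theorem claims.
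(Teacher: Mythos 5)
Your overall approach is the same as the paper's, whose proof is essentially one sentence: the entries of $K$ are finite-degree functions of the corresponding entries of a rank-$d$ Gram matrix (polynomial case) or a distance matrix (RBF case), so each observed $K_{ij}$ determines $G_{ij}$ or $D_{ij}$ up to a finite set, and the conclusion is imported from Theorems~\ref{Thm:generic-low-rank} and~\ref{Thm:rigidity}. This is exactly your step~(3): the entrywise map intertwines the coordinate projection $\Omega$ on the algebraic variety with that on the kernel manifold and is fiberwise finite, so finiteness of $\Omega^{-1}(\Omega(D))$ (resp.\ $\Omega^{-1}(\Omega(G))$) transports to finiteness of the corresponding fiber over $K$. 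That part of your writeup is correct and is what actually proves the theorem.

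However, the coherence claims you make along the way in steps~(1) and~(2) are not correct and are also not needed. You assert that because the entrywise map $t\mapsto e^{-\gamma t}$ (or $t\mapsto(t+c)^p$) has a diagonal Jacobian, it ``carries tangent flats of $\CayMen(n,d)$ to tangent flats of the manifold of RBF kernel matrices with the \emph{same} coherence.'' This is false: a diagonal but non-scalar linear map does not preserve coherence. For instance, in $\KK^2$ with $T_D=\spn\left((1,1)^\top\right)$ and $J=\operatorname{diag}(1,c)$, one has $\coh(T_D)=1/2$ but $\coh(J\,T_D)=c^2/(1+c^2)\to 1$ as $c\to\infty$. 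Since the Jacobian entries $-\gamma e^{-\gamma D_{ij}}$ (resp.\ $p(G_{ij}+c)^{p-1}$) vary across coordinates, the kernel manifold generally does \emph{not} inherit the coherence bound $C\,d/n$ from $\CayMen(n,d)$ or $\DetVSymN(n,d)$. This also means you cannot invoke Lemma~\ref{Lem:cohboundflat}, which requires actual containment in a flat, not an image under a coordinatewise map. The correct route, which you also supply, is to apply Theorem~\ref{Thm:cohp} (via Theorems~\ref{Thm:generic-low-rank} and~\ref{Thm:rigidity}) directly to the underlying \emph{algebraic} variety $\DetVSym(n,d)$ or $\CayMen(n,d)$, and then push the reconstructibility through the entrywise finite map; no statement about $\coh$ of the kernel manifold itself is needed or used.
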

\begin{proof}
This follows from Theorems~\ref{Thm:generic-low-rank} and~\ref{Thm:rigidity}, and the fact the entries of $K$ are finite (degree) functions over either a rank-$d$- or a distance matrix.
\end{proof}
Theorem~\ref{Thm:kernels} means that while kernel matrices are not necessarily algebraic, they also exhibit sampling bounds with a coherence-equivalent of $d/n$.

\section{Conclusion}
We expect that the framework presented here will serve as the basis for
investigations into a broader set of applications than just the examples
here.

Also, we would expect an investigation of Theorem \ref{Thm:cohp} for different sampling scenarios to be very
interesting.

Namely, one can ask in which cases the $\log n$ term can be removed, in dependence of the particular sampling distribution, or the signal space - keeping in mind that the coupon collector's lower bound is not compulsory in every scenario, and that various results exist, which assert, under different sampling assumptions or other kinds of sparsity assumptions, bounds that are linear in $n$.

Any result along these lines would potentially allow us to address
the question of the required sampling rates needed for reconstruction
of only a linear-size fraction of the coordinates of the signal, which
is enough for many practical scenarios.

\newpage

\bibliographystyle{plainnat}

\begin{thebibliography}{17}
\providecommand{\natexlab}[1]{#1}
\providecommand{\url}[1]{\texttt{#1}}
\expandafter\ifx\csname urlstyle\endcsname\relax
\providecommand{\doi}[1]{doi: #1}\else
\providecommand{\doi}{doi: \begingroup \urlstyle{rm}\Url}\fi

\bibitem[Cand{\`e}s et~al.(2012)Cand{\`e}s, Strohmer, and
Voroninski]{Candes:uq}
E.~J. Cand{\`e}s, T.~Strohmer, and V.~Voroninski.
\newblock Phase{L}ift: Exact and stable signal recovery from magnitude
measurements via convex programming.
\newblock \emph{Communications on Pure and Applied Mathematics}, 2012.

\bibitem[Cand{\`e}s and Recht(2009)]{CR09}
Emmanuel~J. Cand{\`e}s and Benjamin Recht.
\newblock Exact matrix completion via convex optimization.
\newblock \emph{Found. Comput. Math.}, 9\penalty0 (6):\penalty0 717--772, 2009.

\bibitem[Cand{\`e}s and Romberg(2007)]{CandesRomberg}
Emmanuel~J. Cand{\`e}s and Justin Romberg.
\newblock Sparsity and incoherence in compressive sampling.
\newblock \emph{Inverse Problems}, 23\penalty0 (3):\penalty0 969--985, 2007.

\bibitem[Cand{\`e}s and Tao(2010)]{CT10}
Emmanuel~J. Cand{\`e}s and Terence Tao.
\newblock The power of convex relaxation: near-optimal matrix completion.
\newblock \emph{IEEE Trans. Inform. Theory}, 56\penalty0 (5):\penalty0
2053--2080, 2010.

\bibitem[Casazza and Leon(2006)]{CasLeo06}
Peter~G. Casazza and Manuel~T. Leon.
\newblock Existence and construction of finite tight frames.
\newblock \emph{J. Concr. Appl. Math.}, 4\penalty0 (3):\penalty0 277--289,
2006.

\bibitem[Donoho(2006)]{Donoho}
David~L. Donoho.
\newblock Compressed sensing.
\newblock \emph{Information Theory, IEEE Transactions on}, 52\penalty0
(4):\penalty0 1289--1306, april 2006.

\bibitem[Jackson et~al.(2007)Jackson, Servatius, and Servatius]{JSS07}
Bill Jackson, Brigitte Servatius, and Herman Servatius.
\newblock The 2-dimensional rigidity of certain families of graphs.
\newblock \emph{J. Graph Theory}, 54\penalty0 (2):\penalty0 154--166, 2007.

\bibitem[Kasiviswanathan et~al.(2011)Kasiviswanathan, Moore, and Theran]{KMT11}
Shiva Kasiviswanathan, Cristopher Moore, and Louis Theran.
\newblock The rigidity transition in random graphs.
\newblock In \emph{Proc. of SODA'11}, 2011.

\bibitem[Keshavan et~al.(2010)Keshavan, Montanari, and Oh]{KMO10}
Raghunandan~H. Keshavan, Andrea Montanari, and Sewoong Oh.
\newblock Matrix completion from a few entries.
\newblock \emph{IEEE Trans. Inform. Theory}, 56\penalty0 (6):\penalty0
2980--2998, 2010.

\bibitem[Király et~al.(2012)Király, Theran, Tomioka, and Uno]{KTTU12}
Franz~J. Király, Louis Theran, Ryota Tomioka, and Takeaki Uno.
\newblock The algebraic combinatorial approach for low-rank matrix completion.
\newblock Preprint, arXiv:1211.4116, 2012.

\bibitem[Laman(1970)]{L70}
G.~Laman.
\newblock On graphs and rigidity of plane skeletal structures.
\newblock \emph{J. Engrg. Math.}, 4:\penalty0 331--340, 1970.

\bibitem[Landau(1967)]{Landau}
Henry~J. Landau.
\newblock Necessary density conditions for sampling and interpolation of
certain entire functions.
\newblock \emph{Acta Mathematica}, 117\penalty0 (1):\penalty0 37--52, 1967.
\newblock ISSN 0001-5962.

\bibitem[Mumford(1999)]{Mumford}
David Mumford.
\newblock \emph{The Red Book of Varieties and Schemes}.
\newblock Lecture Notes in Mathematics. Springer-Verlag Berlin Heidelberg,
1999.

\bibitem[Nyquist(1928)]{Nyquist}
Harry Nyquist.
\newblock Thermal agitation of electric charge in conductors.
\newblock \emph{Phys. Rev.}, 32:\penalty0 110--113, Jul 1928.

\bibitem[Rudelson(1999)]{Rudelson1999}
M.~Rudelson.
\newblock Random vectors in the isotropic position.
\newblock \emph{J. Funct. Anal.}, 164\penalty0 (1):\penalty0 60--72, 1999.

\bibitem[Rudin(1976)]{RudinAnalysis}
Walter Rudin.
\newblock \emph{Principles of mathematical analysis}.
\newblock McGraw-Hill Book Co., New York, third edition, 1976.
\newblock International Series in Pure and Applied Mathematics.

\bibitem[Spielman et~al.(2012)Spielman, Wang, and Wright]{Spielman}
Daniel~A. Spielman, Huan Wang, and John Wright.
\newblock Exact recovery of sparsely-used dictionaries.
\newblock \emph{Journal of Machine Learning Research - Proceedings Track},
23:\penalty0 37.1--37.18, 2012.

\end{thebibliography}

\newpage
\appendix

\section{Finiteness of Random Projections}
The theorem, which will be proved in this section and which is probably folklore, states that for a {\it general} system of coordinates, a number of $\dim (\varX)$ observation is sufficient for identifiability.

\begin{Thm}\label{Thm:randprojApp}
Let $\varX\subseteq\mathbb{K}^n$ be an algebraic variety or a compact analytic variety, let $\Omega:\mathbb{K}^n\rightarrow \mathbb{K}^m$ a generic linear map. Let $x\in \varX$ be a smooth point. Then, $\varX\cap\Omega^{-1}(\Omega(x))$ is finite if and only if $k\ge \dim (\varX)$, and $\varX\cap\Omega^{-1}(\Omega(x))=\{x\}$ if $m > \dim (\varX ).$
\end{Thm}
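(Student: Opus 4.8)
The plan is to reduce the statement to the classical dimension theory of generic linear sections. I would first replace $\varX$ by its (finitely many) irreducible components, since $\varX\cap\Omega^{-1}(\Omega(x))$ is finite exactly when each such intersection is, and $\dim\varX$ is the largest component dimension; so assume $\varX$ irreducible of dimension $d$. The ranges $m\ge n$ are trivial (a generic linear $\Omega$ is then injective), so assume $m<n$; then $K:=\ker\Omega$ is a linear subspace of dimension $n-m$, and since $\Omega\mapsto K$ is dominant onto the Grassmannian, a generic choice of $\Omega$ yields a generic $K$. The key observation is that $\Omega^{-1}(\Omega(x))=x+K$, hence $\varX\cap\Omega^{-1}(\Omega(x))=x+(Y\cap K)$, where $Y:=\varX-x$ is irreducible of dimension $d$ and $0$ is a smooth point of $Y$.

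For the ``only if'' direction, if $m<d$ then $x+K$ is cut out by $m$ hyperplanes, so by the affine dimension theorem every component of $\varX\cap(x+K)$ has dimension $\ge d-m\ge 1$; since $x$ lies in this set, it is infinite. For the ``if'' direction I would pass to projective space: let $\pi:\mathbb{K}^n\setminus\{0\}\to\mathbb{P}^{n-1}$ be the projectivization, set $\tilde Y:=\overline{\pi(Y\setminus\{0\})}$, a projective variety of dimension $\le d$, and note $\pi(K\setminus\{0\})=\mathbb{P}(K)$ is a generic linear subspace of $\mathbb{P}^{n-1}$ of dimension $n-m-1$. The classical generic linear section theorem (e.g.\ \cite{Mumford}) gives $\dim\big(\mathbb{P}(K)\cap\tilde Y\big)\le\dim\tilde Y-m\le d-m$, the intersection being empty when $\dim\tilde Y<m$. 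If $m>d$ then $\dim\tilde Y\le d<m$, so $\mathbb{P}(K)\cap\tilde Y=\emptyset$; a nonzero $v\in Y\cap K$ would project to a point of $\mathbb{P}(K)\cap\tilde Y$, so $Y\cap K=\{0\}$ and $\varX\cap\Omega^{-1}(\Omega(x))=\{x\}$. If $m=d$, then $\mathbb{P}(K)\cap\tilde Y$ is a finite set $\{p_1,\dots,p_s\}$, so $Y\cap(K\setminus\{0\})$ lies in the finitely many lines $\ell_i:=\pi^{-1}(p_i)\cup\{0\}\subseteq K$; each $Y\cap\ell_i$ is a subvariety of the line $\ell_i$, hence finite or all of $\ell_i$, and the latter is impossible because genericity of $K$ forces $T_0Y\cap K=\{0\}$ (here $\dim T_0Y+\dim K=d+(n-m)=n$), so $0$ is isolated in $Y\cap K$ and cannot lie on a one-dimensional component. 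Thus $Y\cap K$, and hence the fiber, is finite. (When $n-m=1$ one can skip the projectivization: a generic line through $x$ is not contained in $\varX$.)

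The compact analytic case follows the same scheme. If $\mathbb{K}=\mathbb{C}$ it is vacuous: a compact analytic subvariety of $\mathbb{C}^n$ is a finite point set by the maximum principle, so $d=0$ and a generic $\Omega$ both has finite fibers and separates these points. If $\mathbb{K}=\mathbb{R}$, then $Y$ is a compact real-analytic set --- compactness being used precisely so that $\tilde Y$ stays compact with a well-defined dimension $\le d$ --- the avoidance and dimension bounds for $\mathbb{P}(K)$ are again dimension counts over the finite-dimensional Grassmannian, the ``only if'' part uses the implicit function theorem at the smooth point $x$ in place of the affine dimension theorem, and ``a subvariety of a line is finite or everything'' becomes ``a compact real-analytic subset of a line is finite or contains an interval'', the interval case again excluded because $0$ is isolated in $Y\cap K$.

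I expect the main obstacle to be the global finiteness step when $m=d$: local isolatedness of $x$ in the fiber is immediate from transversality of $T_x\varX$ and $K$, but ruling out extra positive-dimensional (or merely infinite) pieces of $\varX\cap(x+K)$ is exactly what the passage to $\mathbb{P}^{n-1}$ and the classical generic-section count are for. The remaining bookkeeping --- that ``generic $\Omega$'' simultaneously delivers all the genericity needed of $K=\ker\Omega$ (transversality with $T_0Y$ and genericity of $\mathbb{P}(K)$ against $\tilde Y$) --- is routine, since each such condition cuts out a Zariski-open dense set.
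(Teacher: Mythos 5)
Your proof is correct in its essentials and takes a genuinely more explicit route than the paper, whose proof of this theorem is a two-sentence appeal to a ``height-theorem-like'' dimension formula $\codim(\varX\cap H)=\codim\varX+\codim H$ for a generic flat $H$, followed by the observation that once the fiber over a codimension-$d$ projection is finite, one additional generic linear functional separates the finitely many points (this is the paper's argument for the $\{x\}$ case with $m>d$). You instead projectivize $Y=\varX-x$ and run the classical generic-linear-section dimension count in $\mathbb P^{n-1}$ directly, deriving $\mathbb P(K)\cap\tilde Y=\emptyset$ when $m>d$ and finiteness of $\mathbb P(K)\cap\tilde Y$ when $m=d$, with the potential whole-line components at $m=d$ ruled out by transversality of $K$ and $T_0Y$; you also treat the compact analytic case, which the paper does not address. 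Both approaches buy the same theorem, but the paper's version is shorter because it never unwinds the dimension formula, while yours is self-contained and isolates exactly why the $m=d$ boundary is where extra care (transversality at $x$) is needed.

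One small point worth tightening: your reduction to irreducible components is stated only for the finiteness claim, but the $\{x\}$ conclusion also requires showing that components $\varX_j$ \emph{not} containing $x$ satisfy $\varX_j\cap(x+K)=\emptyset$. This follows from the same projective count --- here $0\notin Y_j:=\varX_j-x$, so $\mathbb P(K)\cap\tilde Y_j=\emptyset$ when $m>\dim\varX_j$ already gives $Y_j\cap K=\emptyset$, and when $m=\dim\varX_j$ the finitely many candidate lines $\ell_i$ cannot lie in $Y_j$ because $0\in\ell_i\setminus Y_j$ --- but the step deserves a sentence, since the transversality-at-$0$ argument you invoke is unavailable when $x\notin\varX_j$.
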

\begin{proof}
The theorem follows from the the more general height-theorem-like statement that
\begin{align*}
&\codim \left(\varX\cap H\right)\\
&= \codim (\varX)+\codim(H) = \codim (\varX) + n - k,
\end{align*}
where $H$ is a generic $k$-flat. Then, the first statement about generic finiteness follows by taking a generic $y\in\Omega(\varX)$ and observing that $\Omega^{-1}(y)=H\cap \varX$ where $H$ is generic if $k\le \dim(\varX)$. That implies in particular that if $k=\dim(\varX)$, then the fiber $\Omega^{-1}(\Omega(x))$ for a generic $x\in\varX$ consists of finitely many points, which can be separated by an additional generic projection, thus the statement follows.
\end{proof}

Theorem~\ref{Thm:randprojApp} can be interpreted in two ways. On one hand, it means that any point on $\varX$ can be reconstructed from exactly $\dim(\varX)$ random linear projections. On the other hand, it means that if the chosen coordinate system in which $\varX$ lives is random, then $\dim(\varX)$ measurements suffice for (finite) identifiability of the map - no more structural information is needed. In view of Theorem~\ref{Thm:cohp}, this implies that the log-factor and the probabilistic phenomena in identifiability occur when the chosen coordinate system is degenerate with respect to the variety $\varX$ in the sense that it is intrinsically aligned.

\section{Analytic Reconstruction Bounds and Concentration Inequalities}\label{Sec:mainproof}

This appendix collects some analytic criteria and bounds which are used in the proof of Theorem~\ref{Thm:cohp}. The first lemma relates local injectivity to generic finiteness and contractivity of a linear map. It is related to \citep[Corollary~4.3 ]{CR09}.

\begin{Lem}\label{Lem:injty}
Let $\varphi:\varX\rightarrow\varY$ be a surjective map of complex algebraic varieties, let $x\in \varX$, and $y=\varphi(x)$ be smooth points of $\varX$ resp.~$\varY$. Let
$$\diff \varphi: T_x\varX\rightarrow T_y\varY $$
be the induced map of tangent spaces\footnote{$T_x\varX$ is the tangent plane of $\varX$ at $x$, which is identified with a vector space of formal differentials where $x$ is interpreted at $0$. Similarly, $T_y\varY$ is identified with the formal differentials around $y$. The linear map $\diff\varphi$ is induced by considering $\varphi(x+\diff v) = y + \diff v'$ and setting $\diff \varphi (\diff v) = \diff v';$ one checks that this is a linear map since $x,y$ are smooth. Furthermore, $T_x\varX$ and $T_y\varY$ can be endowed with the Euclidean norm and scalar product it inherits from the tangent planes. Thus, $\diff\varphi$ is also a linear map of normed vector spaces which is always bounded and continuous, but not necessarily proper. }. Then, the following are equivalent:
\begin{description}
\item[(i)] There is an complex open neighborhood $U\ni x$ such that the restriction $\varphi:U\rightarrow \varphi(U)$ is bijective.
\item[(ii)] $\diff \varphi$ is bijective.
\item[(iii)] There exists an invertible linear map $\theta:T_y\varY\rightarrow T_x\varX.$
\item[(iv)] There exists a linear map $\theta:T_y\varY\rightarrow T_x\varX$ such that the linear map
$$\theta\circ \diff\varphi - \id,$$
where $\id$ is the identity operator, is contractive\footnote{A linear operator $\calA$ is contractive if $\|\calA(x)\|<1$ for all $x$ with $\|x\|< 1$.}.
\end{description}

If moreover $\varX$ is irreducible, then the following is also equivalent:
\begin{description}
\item[(v)] $\varphi^{-1}(y)$ is finite for generic $y\in\varY$.
\end{description}

\end{Lem}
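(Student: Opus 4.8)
The plan is to prove the chain of equivalences (i)--(v) by establishing a cycle among (i)--(iv) and then, under irreducibility, closing the loop to (v). I would begin with the easy implications: (iii) $\Rightarrow$ (ii) and (ii) $\Rightarrow$ (iii) are essentially the same statement, since $\diff\varphi$ bijective means it has a two-sided inverse, which is exactly a linear map $\theta$ of the required type; and (ii) $\Leftrightarrow$ (iv) follows because if $\diff\varphi$ is bijective we may take $\theta = (\diff\varphi)^{-1}$, giving $\theta\circ\diff\varphi - \id = 0$, which is trivially contractive, while conversely if some $\theta$ makes $\theta\circ\diff\varphi - \id$ contractive, then $\theta\circ\diff\varphi = \id + (\text{contraction})$ is invertible by a Neumann-series / spectral-radius argument, which forces $\diff\varphi$ to be injective; surjectivity of $\diff\varphi$ then comes from the fact that $\varphi$ is a surjective map of varieties, so at smooth points $\diff\varphi$ is surjective by generic smoothness (the differential of a dominant map is surjective on a dense open set, and we may move $x$ slightly to land there, or invoke that $\dim T_x\varX = \dim\varX \ge \dim\varY = \dim T_y\varY$ together with injectivity).

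Next I would handle (i) $\Leftrightarrow$ (ii), which is the analytic heart of the argument. The implication (i) $\Rightarrow$ (ii) is the chain rule applied to $\varphi$ and its local inverse: if $\varphi\colon U \to \varphi(U)$ is a biholomorphism near $x$, then $\diff\varphi$ is an isomorphism of tangent spaces. For (ii) $\Rightarrow$ (i), I would invoke the holomorphic inverse function theorem on local analytic charts: since $x$ and $y$ are smooth points, we may choose local biholomorphic parametrizations of $\varX$ near $x$ and $\varY$ near $y$ by open sets in $\CC^{\dim\varX}$ and $\CC^{\dim\varY}$ respectively, in which $\varphi$ becomes a holomorphic map between open subsets of complex affine spaces of the same dimension (dimensions agree because $\diff\varphi$ is bijective), with invertible Jacobian at the base point; the inverse function theorem then yields a local holomorphic inverse, hence local bijectivity of $\varphi$.

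Finally, under the additional hypothesis that $\varX$ is irreducible, I would close the loop with (v). For (ii) $\Rightarrow$ (v): bijectivity of $\diff\varphi$ at the smooth point $x$, combined with (i), shows $\varphi$ is locally injective near $x$, so the fiber through $x$ is locally finite at $x$; since $\varphi$ is surjective and $\varX$ is irreducible, $\dim\varX = \dim\varY + \dim(\text{generic fiber})$, and the existence of even one point where the fiber dimension is $0$ forces the generic fiber dimension to be $0$, i.e.\ $\varphi^{-1}(y)$ is finite for generic $y\in\varY$ (upper semicontinuity of fiber dimension, e.g.\ via the theorem on dimension of fibers in \cite{Mumford}). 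For (v) $\Rightarrow$ (ii): if the generic fiber is finite, then $\dim\varX = \dim\varY$, and generic smoothness of the dominant map $\varphi$ between irreducible varieties gives a dense open set where $\diff\varphi$ is an isomorphism; since $x$ was only required to be a smooth point with $y$ smooth, and the statement (ii)--(v) is about the existence of such configurations (one slides $x$ within the smooth locus, where all these conditions are open and the generic behavior is the prototypical one), bijectivity of $\diff\varphi$ follows. The main obstacle I anticipate is the careful bookkeeping in the (v) $\Leftrightarrow$ (ii) step: one must be precise about \emph{which} $x$ is meant (a fixed smooth point versus a generic one) and use irreducibility to transfer the generic statement (v) back to the differential at the given smooth point, relying on the fact that on an irreducible variety the locus where $\diff\varphi$ fails to have maximal rank is a proper closed subset and hence that the "good" behavior at generic points propagates; the real-analytic versus complex-algebraic distinction and the properness caveat noted in the footnote mean one should phrase the inverse function theorem application purely complex-analytically on the smooth loci.
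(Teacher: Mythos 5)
Your proof follows the same broad route as the paper's: the nontrivial content lives in (i) $\Leftrightarrow$ (ii) (local invertibility of $\varphi$ versus invertibility of $\diff\varphi$) and in linking (v) to the rest via generic fiber dimension, with the remaining pieces being linear algebra. The paper uses the constant rank theorem (9.6 of \citet{RudinAnalysis}) for (i) $\Leftrightarrow$ (ii) and the upper semicontinuity theorem (I.8, Corollary~3 of \citet{Mumford}) for (i) $\Leftrightarrow$ (v); you instead invoke the holomorphic inverse function theorem on local charts and argue directly with fiber dimensions and generic smoothness. These are essentially cosmetic differences in the choice of standard tool.

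The genuinely different step is (iv) $\Rightarrow$ (ii). The paper proves (iv) $\Rightarrow$ (iii) by contradiction: if no isomorphism $\theta$ exists then $\theta\circ\diff\varphi$ has nontrivial kernel, and for $\beta$ in that kernel one has $(\theta\circ\diff\varphi - \id)(\beta) = -\beta$, defeating contractivity. You instead argue directly that contractivity of $\theta\circ\diff\varphi - \id$ forces $\theta\circ\diff\varphi$ to be invertible (Neumann series), so $\diff\varphi$ is injective, and then close with surjectivity. Here the hedge you offer matters: ``move $x$ slightly to land in the dense open set of generic smoothness'' changes the statement, since (iv) is an assertion about a fixed $x$, and the equivalence must hold at that $x$. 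However, your second alternative, the dimension count $\dim T_x\varX = \dim\varX \geq \dim\varY = \dim T_y\varY$ combined with injectivity of $\diff\varphi$, is correct and actually more robust than the paper's bare assertion that surjectivity of $\varphi$ implies surjectivity of $\diff\varphi$ at the given smooth point (which is not automatic; consider $z\mapsto z^2$ at $z=0$). Commit to the dimension count, not the ``move $x$'' version. You also correctly flag the ``which $x$?'' subtlety in (v) $\Leftrightarrow$ (ii); the paper handles this by routing through (i) and citing upper semicontinuity, and in the application only the direction (iv) $\Rightarrow$ (v) is actually used, so the delicate reverse implication never comes into play. Finally, both you and the paper treat (iii) $\Rightarrow$ (ii) loosely --- as written, (iii) is merely a statement that the two tangent spaces have equal dimension, and extracting invertibility of $\diff\varphi$ from that requires the surjectivity-plus-injectivity count; it is worth making that step explicit rather than labelling (ii) and (iii) ``essentially the same.''
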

\begin{proof}
(ii) is equivalent to the fact that the matrix representing $\diff\varphi$ is an invertible matrix. Thus, by the properties of the matrix inverse, (ii) is equivalent to (iii), and (ii) is equivalent to (i) by the constant rank theorem (e.g., 9.6 in \citet{RudinAnalysis}).\\

By the upper semicontinuity theorem (I.8, Corollary 3 in~\citet{Mumford}), (i) is equivalent to (v) in the special case that $\varX$ is irreducible.\\

(ii)$\Rightarrow$ (iv): Since $\diff\varphi$ is bijective, there exists a linear inverse $\theta:T_y\varY\rightarrow T_x\varX$ such that $\theta\circ\diff\varphi = \id.$ Thus
$$\theta\circ \diff\varphi - \id = 0$$
which is by definition a contractive linear map.\\

(iv)$\Rightarrow$ (iii): We proceed by contradiction. Assume that no linear map $\theta:T_y\varY\rightarrow T_x\varX$ is invertible.
Since $\varphi$ is surjective, $\diff\varphi$ also is, which implies that for each $\theta$, the linear map $\theta\circ \diff\varphi$ is rank deficient. Thus, for every $\theta$, there exists a non-zero $\alpha\in\Ker \theta.$ By linearity and surjectivity of $\diff\Omega$, there exists a non-zero $\beta\in T_x\varX$ with $\diff\Omega(\beta)=\alpha.$ Without loss of generality we can assume that $\|\beta\|=1$, else we multiply $\alpha$ and $\beta$ by the same constant factor. By construction,
$$ \left\|[\theta\circ \diff\varphi - \id](\beta)\right\|= \|\theta (\alpha) - \beta\|=\|\beta\|=1, $$
so $\theta$ cannot be contractive. Since $\theta$ was arbitrary, this proves that (iv) cannot hold if (iii) does not hold, which is equivalent to the claim.
\end{proof}

The second lemma is a consequence of Rudelson's Lemma, see \citet{Rudelson1999}, for Bernoulli samples.

\begin{Lem}\label{Lem:Rudelson}
Let $y_1,\dots, y_M$ be vectors in $\mathbb{R}^n,$ let $\varepsilon_1,\dots, \varepsilon_M$ be i.i.d.~Bernoulli variables, taking value $1$ with probability $p$ and $0$ with probability $(1-p)$. Then,
$$\mathbb{E} \left(\left\| 1-\sum_{i=1}^M\left(\frac{\varepsilon_i}{p}\right)y_i\otimes y_i\right\|\right)\le C \sqrt{\frac{\log n}{p}} \max_{1\le i\le M} \|y_i\|$$
with an absolute constant $C$, provided the right hand side is $1$ or smaller.
\end{Lem}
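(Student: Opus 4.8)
The plan is to deduce the lemma from the selection lemma of \cite{Rudelson1999} (itself a consequence of the non-commutative Khintchine inequality) by a symmetrization step, and then to resolve the resulting self-referential bound using the hypothesis that the target quantity is at most~$1$. Throughout I use the standing normalization that $\sum_{i=1}^M y_i\otimes y_i$ is the operator denoted $1$ in the statement, i.e.\ it acts as the identity; in the application inside the proof of Theorem~\ref{Thm:cohp} one has $y_i=\calP(e_i)$, so $\sum_i y_i\otimes y_i=\calP$ is the orthogonal projection onto $T_x$ and acts as the identity there, while $\max_i\|y_i\|^2=\coh(x)$. Since $\varepsilon_i\in\{0,1\}$ and $\mathbb{E}[\varepsilon_i/p]=1$, we have $\mathbb{E}\sum_i\frac{\varepsilon_i}{p}\,y_i\otimes y_i=\sum_i y_i\otimes y_i$, so $E:=\mathbb{E}\bigl\|\sum_i(\tfrac{\varepsilon_i}{p}-1)\,y_i\otimes y_i\bigr\|$ is exactly the quantity to be bounded.

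First I would symmetrize: introducing an independent Rademacher sequence $\sigma_1,\dots,\sigma_M$ (and, if convenient, an independent copy of the $\varepsilon_i$), the standard symmetrization inequality for sums of independent mean-zero random operators gives $E\le 2\,\mathbb{E}\bigl\|\sum_i\sigma_i\tfrac{\varepsilon_i}{p}\,y_i\otimes y_i\bigr\|$. Conditioning on $\varepsilon$ and setting $x_i:=\sqrt{\varepsilon_i/p}\;y_i$, so that $\tfrac{\varepsilon_i}{p}\,y_i\otimes y_i=x_i\otimes x_i$, the lemma of \cite{Rudelson1999} applied to the $x_i$ bounds $\mathbb{E}_\sigma\bigl\|\sum_i\sigma_i\,x_i\otimes x_i\bigr\|$ by $C_0\sqrt{\log n}\,\max_i\|x_i\|\cdot\bigl\|\sum_i x_i\otimes x_i\bigr\|^{1/2}$. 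Using $\max_i\|x_i\|\le p^{-1/2}\max_i\|y_i\|$ and $\sum_i x_i\otimes x_i=\sum_i\tfrac{\varepsilon_i}{p}\,y_i\otimes y_i$, then taking the expectation over $\varepsilon$ and applying Jensen's inequality in the form $\mathbb{E}\sqrt{\,\cdot\,}\le\sqrt{\mathbb{E}(\cdot)}$, yields
$$E\;\le\;2C_0\sqrt{\frac{\log n}{p}}\,\max_i\|y_i\|\cdot\Bigl(\mathbb{E}\bigl\|\textstyle\sum_i\tfrac{\varepsilon_i}{p}\,y_i\otimes y_i\bigr\|\Bigr)^{1/2}.$$

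Finally I would close the loop. By the triangle inequality and the centering identity above, $\mathbb{E}\bigl\|\sum_i\tfrac{\varepsilon_i}{p}\,y_i\otimes y_i\bigr\|\le\|\mathrm{id}\|+E=1+E$, so with $a:=2C_0\sqrt{(\log n)/p}\,\max_i\|y_i\|$ we obtain $E\le a\sqrt{1+E}$. A trivial case split finishes the argument: if $E\le 1$ then $E\le a\sqrt2<2a$, while if $E>1$ then $1+E<2E$ gives $E\le a\sqrt{2E}$, hence $E<2a^2\le 2a$; in both cases we used $a\le1$, which holds under the hypothesis that the asserted bound $C\sqrt{(\log n)/p}\,\max_i\|y_i\|$ is at most $1$, for the choice $C=4C_0$. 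Absorbing numerical factors into $C$ gives the claim. The only genuine input here is Rudelson's lemma, which I take as given; the one point requiring care is this self-referential step, since the factor $\bigl\|\sum_i\tfrac{\varepsilon_i}{p}\,y_i\otimes y_i\bigr\|^{1/2}$ is a power of essentially the same operator norm being estimated, and it is precisely the hypothesis that the right-hand side is at most $1$ that converts the quadratic inequality into a clean linear bound. I would also note that symmetrization, and (for the complex version needed in the proof of Theorem~\ref{Thm:cohp}) the passage $\mathbb{C}^n\hookrightarrow\mathbb{R}^{2n}$, cost only absolute constants.
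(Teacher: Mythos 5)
Your proof is correct, and it fills in exactly what the paper leaves out. The paper's own ``proof'' of Lemma~\ref{Lem:Rudelson} is essentially a citation to Theorem~3.1 of Cand\`es--Romberg \cite{CandesRomberg}, plus a two-line sketch noting that the bound also follows from Rudelson's original selection lemma \cite{Rudelson1999} by substituting $\frac{\varepsilon_i}{\sqrt p}y_i$ for $y_i$ and ``upper bounding the right hand side''. Your argument is a fully detailed version of that sketch: symmetrize to pass from the centered Bernoulli sum to a Rademacher sum, condition on $\varepsilon$ and apply Rudelson's lemma with $x_i=\sqrt{\varepsilon_i/p}\,y_i$ (using $\varepsilon_i^2=\varepsilon_i$), take the expectation over $\varepsilon$ with Jensen, and close the resulting self-referential inequality $E\le a\sqrt{1+E}$ via a case split that uses the hypothesis that the asserted right-hand side is at most $1$ (equivalently $a\le 1/2$ after setting $C=4C_0$). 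You also correctly flag a point the lemma's statement elides: the bound is a statement about $\bigl\|\sum_i(\varepsilon_i/p-1)y_i\otimes y_i\bigr\|$ only under the tacit normalization $\sum_i y_i\otimes y_i=\id$, which is what makes the ``$1$'' in the display and the self-bounding step $\mathbb{E}\|\sum_i\frac{\varepsilon_i}{p}y_i\otimes y_i\|\le 1+E$ legitimate; this normalization does hold in the application inside the proof of Theorem~\ref{Thm:cohp}, where $y_i=\calP(e_i)$ and $\sum_i y_i\otimes y_i=\calP$ acts as the identity on $T_x$.
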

\begin{proof}
The statement is exactly Theorem~3.1 in \citet{CandesRomberg}, up to a renaming of variables, the proof can also be found there. It can also be directly obtained from Rudelson's original formulation in \citet{Rudelson1999} by substituting $\frac{\varepsilon_i}{\sqrt{p}}y_i$ in the above formulation for $y_i$ in Rudelson's formulation and upper bounding the right hand side in Rudelson's estimate.
\end{proof}

\end{document}